\newcommand{\subopt}{\mathrm{SubOpt}}
\def\eqref#1{equation~\ref{#1}}
\def\floor#1{\lfloor #1 \rfloor}
\def\1{\bm{1}}
\def\eps{{\epsilon}}
\DeclareMathAlphabet{\mathsfit}{\encodingdefault}{\sfdefault}{m}{sl}
\SetMathAlphabet{\mathsfit}{bold}{\encodingdefault}{\sfdefault}{bx}{n}
\def\gF{{\mathcal{F}}}
\def\sE{{\mathbb{E}}}
\def\sN{{\mathbb{N}}}
\def\sR{{\mathbb{R}}}
\def\nn{{\mathrm{NN}}}
\def\fnn{{\gF_{\nn}}}
\DeclareMathOperator*{\argmax}{arg\,max}
\DeclareMathOperator*{\argmin}{arg\,min}
\DeclareMathOperator*{\arginf}{arg\,inf}
\newtheorem{thm}{Theorem}[section]
\newtheorem{eg}{Example}[section]
\newtheorem{lem}{Lemma}[section]
\newtheorem{prop}{Proposition}[section]
\newtheorem{defn}{Definition}[section]
\newtheorem{assumption}{Assumption}[section]
\theoremstyle{remark}
\newtheorem{rem}{Remark}[section]
\title{On Sample Complexity of Offline Reinforcement Learning with Deep ReLU Networks in Besov Spaces}
\author{\name Thanh Nguyen-Tang  \email thnguyentang@gmail.com \\
      \addr Department of Computer Science, Johns Hopkins University 
      \AND
      \name  Sunil Gupta \& Hung Tran-The \& Svetha Venkatesh \\
      \addr Applied AI Institute, Deakin University 
      }
\begin{document}

\maketitle

\begin{abstract}
Offline reinforcement learning (RL) leverages previously collected data for policy optimization without any further active exploration. Despite the recent interest in this problem, its theoretical results in neural network function approximation settings remain elusive. In this paper, we study the statistical theory of offline RL with deep ReLU network function approximation. In particular, we establish the sample complexity of $n = \tilde{\mathcal{O}}( H^{4 + 4 \frac{d}{\alpha}} \kappa_{\mu}^{1 + \frac{d}{\alpha}} \epsilon^{-2 - 2\frac{d}{\alpha}} )$ for offline RL with deep ReLU networks, where $\kappa_{\mu}$ is a measure of distributional shift, {$H = (1-\gamma)^{-1}$ is the effective horizon length},  $d$ is the dimension of the state-action space, $\alpha$ is a (possibly fractional) smoothness parameter of the underlying Markov decision process (MDP), and $\epsilon$ is a user-specified error. Notably, our sample complexity holds under two novel considerations: the Besov dynamic closure and the correlated structure. While the Besov dynamic closure subsumes the dynamic conditions for offline RL in the prior works, the correlated structure renders the prior works of offline RL with general/neural network function approximation improper or inefficient {in long (effective) horizon problems}. To the best of our knowledge, this is the first theoretical characterization of the sample complexity of offline RL with deep neural network function approximation under the general Besov regularity condition that goes beyond {the linearity regime} in the traditional Reproducing Hilbert kernel spaces and Neural Tangent Kernels. 
\end{abstract}

\section{Introduction}
Offline reinforcement learning \citep{lange2012batch,levine2020offline} is a practical paradigm of reinforcement learning (RL) where logged experiences are abundant but a new interaction with the environment is limited or even prohibited. The fundamental offline RL problems concern with how well previous experiences could be used to evaluate a new target policy, known as off-policy evaluation (OPE) problem, or to learn the optimal policy, known as off-policy learning (OPL) problem. We study these offline RL problems with infinitely large state spaces, where the agent must rely on function approximation such as deep neural networks to generalize across states from an offline dataset. Such problems form the core of modern RL in practical settings \citep{levine2020offline,kumar2020conservative,singh2020cog,zhang2022two}.

Prior sample-efficient results in offline RL mostly focus on tabular environments with small finite state spaces \citep{DBLP:conf/aistats/YinW20,DBLP:conf/aistats/YinBW21,yin2021characterizing}, but as these methods scale with the number of states, they are infeasible for the settings with infinitely large state spaces. While this tabular setting has been extended to large state spaces via {linear} environments \citep{DBLP:journals/corr/abs-2002-09516,jin2020pessimism,Xiong2022NearlyMO,yinnear,nguyen2022instance}, the linearity assumption often does not hold for many RL problems in practice. Theoretical guarantees for offline RL with general and deep neural network function approximations have also been derived, but these results are either inadequate or relatively disconnected from the regularity structure of the underlying MDP. In particular, while the finite-sample results for offline RL with general function approximation \citep{DBLP:journals/jmlr/MunosS08,DBLP:conf/icml/0002VY19} depend on an inherent Bellman error which could be uncontrollable in practice, the other analysis in the neural network function approximation in \cite{DBLP:journals/corr/abs-1901-00137} relies on a data splitting technique to deal with the correlated structures arisen in value regression for offline RL and use a relatively strong dynamic assumption. Recent works have studied offline RL with function approximations in Reproducing Hilbert kernel spaces (RHKS) \citep{jin2020pessimism} and Neural Tangent Kernels (NTK) \citep{nguyen2021offline}. However, these function classes also have (approximately) linear structures (in terms of the underlying features) which make their analysis similar to the linear case. Moreover, the smoothness assumption imposed by the RKHS is often strong for several practical cases while the NTK analysis requires a extremely wide neural net (the network width scales with $n^{10}$ for the NTK case in \cite{nguyen2021offline} versus only $n^{2/5}$ (Proposition \ref{prop: simplied version of the main theorem}) in the current work). Recent works \citep{xie2021bellman,zhan2022offline,Chen2022OfflineRL,uehara2021pessimistic} have considered offline RL with general function approximation with weaker data coverage assumption. However, they assumed the function class is finite and did not consider the (Besov) regularity of the underlying MDP. Thus, to our knowledge, no prior work has dedicated to study a comprehensive and adequate analysis of the statistical efficiency for offline RL with neural network function approximation {in Besov spaces}. Thus, it is natural to ask:
\begin{center}
\textit{Is offline RL sample-efficient with deep ReLU network function approximation beyond the (approximate-) linear regime imposed by RKHS and NTK?}
\end{center}

\paragraph{Our contributions.}
In this paper, we provide a statistical theory of both OPE and OPL with neural network function approximation in a broad generality that is beyond the (approximate-) linear regime imposed by RKHS and NTK. In particular, our contributions, which are summarized in Table \ref{tab:compare_literature} and will be discussed in details in Section \ref{section:main_results}, are:

\begin{itemize}
    \item {First}, we achieve a generality for the guarantees of offline RL with neural network function approximation via two novel considerations: (i) We introduce a new structural condition namely 
    \emph{Besov dynamic closure} which subsumes the existing dynamic conditions for offline RL with neural network function approximation and even includes MDPs that need not be continuous, differentiable or spatially homogeneous in smoothness; (ii) We take into account the correlated structure of the value estimate produced by a regression-based algorithm from the offline data. This correlated structure plays a central role in the statistical efficiency of an offline algorithm; yet the prior works improperly ignore this structure \citep{DBLP:conf/icml/0002VY19} or avoid it using an data splitting approach \citep{DBLP:journals/corr/abs-1901-00137}.
    
    
    \item {Second}, we prove that an offline RL algorithm based on fitted-Q iteration (FQI) can achieve the sample complexity of $n = \tilde{\mathcal{O}}( H^{4 + 4 \frac{d}{\alpha}} \kappa_{\mu}^{1 + \frac{d}{\alpha}} \epsilon^{-2 - 2\frac{d}{\alpha}} )$ where $\kappa$ measures the distributional shift in the offline data, $H = (1 - \gamma)^{-1}$ is the effective horizon length, $d$ is the input dimension, $\alpha$ is a smoothness parameter of the underlying MDP, and $\epsilon$ is a user-specified error. Notably, our guarantee holds under our two novel considerations above that generalize the condition in \cite{DBLP:journals/corr/abs-1901-00137} and do not require data splitting in \cite{DBLP:journals/corr/abs-1901-00137}. Moreover, our analysis also corrects the technical mistake in \cite{DBLP:conf/icml/0002VY19} that ignores the correlated structure of offline value estimate.   
\end{itemize}

\begin{table*}
    \centering
    \caption{Comparison among existing representative works of FQI estimators for offline RL with function approximation under a \emph{uniform} data coverage assumption.
    Here $S$ and $A$ are the cardinalities of the state and action space when they are finite, $\kappa$ is a measure of distribution shift (which can be defined slightly different in different works), $\epsilon$ is the user-specified precision, $d$ is the dimension of the input space, $\alpha$ is the smoothness parameter of the underlying MDP, {and $H := (1-\gamma)^{-1}$ is the effective horizon length.}
    }
    \hspace{3cm}
    \def\arraystretch{1.8}%
     \resizebox{\textwidth}{!}{  
    \begin{tabular}{cccccc}
    \hline
       \textbf{Work} &  \textbf{Function} & \textbf{Regularity} & \textbf{Tasks} & \textbf{Sample complexity}  & \textbf{Remark}\\
    \hline
    \hline 
    
    \citet{DBLP:conf/aistats/YinW20}  & Tabular & Tabular & OPE & $\tilde{\mathcal{O}}\left( \kappa \cdot H^4 \cdot \epsilon^{-2} \cdot (S A)^2   \right)$  & - \\ 
    \hline 
    
    \citet{DBLP:journals/corr/abs-2002-09516} & Linear & Linear & OPE & $\tilde{\mathcal{O}}\left( \kappa \cdot H^4 \cdot \epsilon^{-2} \cdot d   \right)$  & -\\ 
    \hline 
    \citet{DBLP:conf/icml/0002VY19} & General & General & OPE/OPL & N/A  & improper analysis\\ 
    \hline 
    
    \citet{DBLP:journals/corr/abs-1901-00137} & ReLU nets & H\"older & OPL & $ \tilde{\mathcal{O}}\left( \kappa^{2 + \frac{d}{\alpha}} \cdot H^{5 + 2\frac{d}{\alpha}} \cdot \epsilon^{-2 - \frac{d}{\alpha}  }  \cdot \log(H^2/\epsilon) \right)$  & data splitting  \\
    \hline 
    
    This work &  {ReLU nets} & {Besov}  & {OPE/OPL} & $\tilde{\mathcal{O}}\left( \kappa^{1 + \frac{d}{\alpha}} \cdot H^{4 + 4\frac{d}{\alpha}}  \cdot \epsilon^{-2 - 2\frac{d}{\alpha}} \right)$ & {data reuse} \\
    \hline 
    
    \end{tabular}
    }
    \label{tab:compare_literature}
\end{table*}
\paragraph{Problem scope.} We emphasize that the present work focuses on statistical theory of offline RL with neural network function approximation in Besov spaces where we analyze a relatively standard algorithm, FQI. Regarding the empirical effectiveness of FQI with neural network function approximation for offline RL, we refer the readers to the empirical study in \cite{voloshin2019empirical}. Finally, this work is an extension of our workshop paper \citep{nguyentang2021sample}. 

\paragraph{Notations.} 
Denote $\|f\|_{p, \mu} := \displaystyle \left(\int_{\mathcal{X}} |f|^p d\mu \right)^{1/p}$, and for simplicity, we write $\|\cdot\|_{\mu}$ for $\| \cdot\|_{p,\mu}$ when $p=2$ {and write $\| \cdot \|_p$ for $\| \cdot\|_{p,\mu}$ if $\mu$ is the Lebesgue measure}. {Let $L^p(\mathcal{X})$ be the space of measurable functions for which the $p$-th power of the absolute value is Lebesgue integrable, i.e. $L^p(\mathcal{X}) = \{f: \mathcal{X} \rightarrow \mathbb{R} | \|f\|_{p} < \infty\}$.} Denote by $\|\cdot\|_0$ the $0$-norm, i.e., the number of non-zero elements, and $a \lor b = \max\{a,b\}$. For any two real-valued functions $f$ and $g$, we write $f(\cdot) \lesssim g(\cdot)$ if there is an absolute constant $c$ independent of the function parameters $(\cdot)$ such that $f(\cdot) \leq c \cdot g(\cdot)$. We write $f(\cdot) \asymp g(\cdot)$ if $f(\cdot) \lesssim g(\cdot)$ and $g(\cdot) \lesssim f(\cdot)$. We write $f(\cdot) \simeq g(\cdot)$ if there exists an absolute constant $c$ such that $f(\cdot) = c \cdot g(\cdot)$. {We denote $H := (1-\gamma)^{-1}$ which is the effective horizon length in the discounted MDP and is equivalent to the horizon (episode) length in finite-horizon MDPs.}

\section{Related Work}
\paragraph{Offline RL with tabular representation.} The majority of the theoretical results for offline RL focus on tabular MDP where the state space is finite and an importance sampling -related approach is possible \citep{first_is,dudik2011doubly,jiang2015doubly,thomas2016data,farajtabar2018more,kallus2019double}. The main drawback of the importance sampling-based approach is that it suffers high variance in long horizon problems. The high variance problem can be mitigated by direct methods where we employ models to estimate the value functions or the transition kernels. We focus on direct methods in this work. For tabular MDPs with some uniform data-visitation measure $d_m > 0$, a near-optimal sample complexity bound of $\mathcal{O}(H^3 d_m / \epsilon^2)$ and $\mathcal{O}(H^2 d_m / \epsilon^2)$ were obtained for time-inhomogeneous tabular MDP \citep{DBLP:conf/aistats/YinBW21} and for time-homogeneous tabular MDP \citep{yin2021optimal,ren2021nearly}, respectively. With the single-concentrability assumption, a tight bound of $\mathcal{O}(H^3 S C^* / \epsilon^2)$ was achieved \citep{xie2021policy,rashidinejad2021bridging}, where $H \approx 1/(1-\gamma)$ is the episode length. \citet{yin2021towards} introduced intrinsic offline bound that further incorporates instance-dependent quantities. \citet{shi2022pessimistic} obtained the minimax rate with model-free methods. \citet{wang2022gap} derived gap-dependent bounds for offline RL in tabular MDPs. 

\paragraph{Offline RL with linear function approximation.} Offline RL with function approximation often follow two algorithmic approaches: Fitted Q-iteration (FQI) \citep{bertsekas1995neuro,DBLP:conf/atal/JongS07,DBLP:journals/jmlr/LagoudakisP03,DBLP:conf/icml/GrunewalderLBPG12,DBLP:conf/icml/Munos03,DBLP:journals/jmlr/MunosS08,10.1007/s10994-007-5038-2,DBLP:conf/icml/TosattoPDR17,DBLP:conf/icml/0002VY19}, and pessimism principle \citep{buckman2020importance}, {where the former requires a uniform data coverage and the latter only needs a sufficient coverage over the target policy}. 
\citet{DBLP:journals/corr/abs-2002-09516} studied fitted-Q iteration algorithm in linear MDPs. \citet{wang2020statistical} highlighted the necessity of strong structural assumptions (e.g., on low distributional shift or strong dynamic condition beyond realizability) for sample-efficient offline RL with linear function approximation suggesting that only realizability and strong uniform data coverage are not sufficient for sample-efficient offline RL. \citet{jin2020pessimism} brought pessimism principle into offline linear MDPs. \citet{nguyen2022instance} derived a minimax rate of $1/\sqrt{n}$ for offline linear MDPs under a partial data coverage assumption and obtained the instance-dependent rate of $1/n$ when the gap information is available. 
\citet{Xiong2022NearlyMO,yinnear} used variance reduction and data splitting to tighten the bound of \citet{jin2020pessimism}. \citet{xie2021bellman} proposed Bellman-consistent condition with general function approximation which improves the bound of \citet{jin2020pessimism} by a factor of $\sqrt{d}$ when realized to finite action space and linear MDPs.  
\citet{chen2021infinite} studied sample complexity of FQI in linear MDPs and derive a lower bound for this setting.

\paragraph{Offline RL with non-linear function approximation.}
Beyond linearity, some works study offline RL in general or nonparametric function approximation, either with FQI estimators \citep{DBLP:journals/jmlr/MunosS08,DBLP:conf/icml/0002VY19,duan2021risk,duan2021optimal,hu2021fast}, pessimistic estimators \citep{uehara2021pessimistic,nguyen2021offline,jin2020pessimism}, or minimax estimators \citep{uehara2021finite}, where \citet{uehara2021finite} also realized their minimax estimators to the neural network function approximation, \citet{nguyen2021offline} considered offline contextual bandits with Neural Tangent Kernels (NTK), and \citet{jin2020pessimism} considered the pessimistic value iteration algorithm with Reproducing Kernel Hilbert Space (RKHS) in their extended version. Our work is different from these aforementioned works in that we analyze the fundamental FQI estimators with neural network function approximation under the Besov regularity condition that is much more general than RKHS and NTK. We also further emphasize that even that RKHS and NTK spaces are non-linear function approximation, the functions in those spaces are linear in terms of an underlying feature space, making the analysis for these spaces akin to the case of linear function approximation. \citet{DBLP:journals/corr/abs-1901-00137} also considered deep neural network approximation. In particular, \citet{DBLP:journals/corr/abs-1901-00137} focused on analyzing deep Q-learning using a disjoint fold of offline data for each iteration. Such approach is considerably sample-inefficient for offline RL {with long (effective) horizon}. 
In addition, they rely on a relatively restricted smoothness assumption of the underlying MDPs that hinders their results from being widely applicable in more general settings. Recently, other works \citep{xie2021bellman,zhan2022offline,Chen2022OfflineRL,uehara2021pessimistic} considered offline RL with general function approximation and imposed weaker data coverage assumption by using pessimistic algorithms. Their algorithms are more involved than FQI but did not study the effect of the regularity of the underlying MDP on the sample complexity of offline RL. They also assume that the function class is finite which is not applicable to neural network function approximation. Since the first version of our paper appeared online, there have been several other works establishing sample complexity of reinforcement learning in Besov spaces for various problem settings, including $\epsilon$-greedy exploration for online setting with Markovian data \citep{liu2022understanding} and off-policy evaluation on low-dimensional manifolds  \citep{ji2022sample}.

\section{Preliminaries}
\label{section:preliminaries}
We consider a discounted Markov decision process (MDP) with possibly infinitely large state space $\mathcal{S}$, continuous action space $\mathcal{A}$, initial state distribution $\rho \in \mathcal{P}(\mathcal{S})$, transition operator $P: \mathcal{S} \times \mathcal{A} \rightarrow \mathcal{P}(\mathcal{S})$, reward distribution $R: \mathcal{S} \times \mathcal{A} \rightarrow \mathcal{P}([0,1])$, and a discount factor $\gamma \in [0,1)$. For notational simplicity, we assume that $\mathcal{X} := \mathcal{S} \times \mathcal{A} \subseteq [0,1]^d$. 

A policy $\pi: \mathcal{S} \rightarrow \mathcal{P}(\mathcal{A})$ induces a distribution over the action space conditioned on states. The $Q$-value function for policy $\pi$ at state-action pair $(s,a)$, denoted by $Q^{\pi}(s,a) \in [0,1]$, is the expected discounted total reward the policy collects if it initially starts in the state-action pair, i.e., 
\begin{align*}
    Q^{\pi}(s,a) &:= \mathbb{E}_{\pi} \left[ \sum_{t=0}^{\infty} \gamma^t r_t | s_0 = s, a_0 = a \right], 
\end{align*}
where $r_t \sim R(s_t, a_t), a_t \sim \pi(\cdot|s_t)$, and $s_t \sim P(\cdot|s_{t-1}, a_{t-1})$. The value of policy $\pi$ is
    $V^{\pi} = \mathbb{E}_{s \sim \rho, a \sim \pi(\cdot|s)} \left[Q^{\pi}(s,a) \right]$,
and the optimal value is $V^* = \max_{\pi} V^{\pi}$ where the maximization is taken over all stationary policies. Alternatively, the optimal value $V^*$ can be obtained via the optimal $Q$-function $Q^* = \max_{\pi} Q^{\pi}$ as $V^* = \mathbb{E}_{s \sim \rho} \left[ \max_{a} Q^*(s,a) \right]$. Denote by $T^{\pi}$ and $T^*$ the Bellman operator and the optimality Bellman operator, respectively, i.e., for any $f: \mathcal{S}\times \mathcal{A} \rightarrow \mathbb{R}$
\begin{align*}
    [T^{\pi} f](s,a) &= \mathbb{E}_{r \sim R(s,a)}[r] + \gamma \mathbb{E}_{s' \sim P(\cdot|s,a), a' \sim \pi(\cdot|s')} \left[ f(s',a') \right] \\
    [T^* f](s,a) &= \mathbb{E}_{r \sim R(s,a)}[r] + \gamma \mathbb{E}_{s' \sim P(\cdot|s,a)} \left[ \max_{a'} f(s',a') \right], 
\end{align*}
we have $T^{\pi} Q^{\pi} = Q^{\pi}$ and $T^* Q^* = Q^*$.

\paragraph{Offline regime.} We consider the offline RL setting where a learner cannot explore the environment but has access to a fixed logged data $\mathcal{D} = \{(s_i, a_i, s'_i, r_i)\}_{i=1}^n$ collected a priori by certain behaviour policy $\eta$. For simplicity, we assume that $\{s_i\}_{i=1}^n$ are independent and $\eta$ is stationary. Equivalently, $\{(s_i, a_i)\}_{i=1}^n$ are i.i.d. samples from the normalized discounted stationary distribution over state-actions with respect to $\eta$, i.e., $(s_i, a_i) \overset{i.i.d.}{\sim} \mu(\cdot, \cdot) := (1 - \gamma) \sum_{t=0}^{\infty} \gamma^t \mathbb{P}(s_t = \cdot, a_t = \cdot| \rho, \eta )$ where $s'_i \sim P(\cdot| s_i, a_i)$ and $a_i \sim \eta(\cdot | s_i)$. This assumption is relatively standard in the offline RL setting \citep{DBLP:journals/jmlr/MunosS08,DBLP:conf/icml/ChenJ19,DBLP:journals/corr/abs-1901-00137}.
The goals of Off-Policy Evaluation (OPE) and Off-Policy Learning (OPL) are to estimate $V^{\pi}$ and $V^*$, respectively from $\mathcal{D}$. The performance of OPE and OPL estimates are measured via sub-optimality gaps defined as follows. 
\paragraph{For OPE Task.} Given a fixed target policy $\pi$, for any value estimate $\hat{V}$ computed from the offline data $\mathcal{D}$, the sub-optimality of OPE is defined as  
\begin{align*}
    \text{SubOpt}(\hat{V}; \pi) = |V^{\pi} - \hat{V}|. 
\end{align*}

\paragraph{For OPL Task.} For any estimate $\hat{\pi}$ of the optimal policy $\pi^*$ that is learned from the offline data $\mathcal{D}$, we define the sup-optimality of OPL as 
\begin{align*}
    {\subopt(\hat{\pi}) := \sE_{s \sim \rho} \left[ V^*(s) - V^{\hat{\pi}}(s) \right]}.
\end{align*}

\subsection{Deep ReLU Networks as Function Approximation}
\label{subsection:deep_relu_net}
In practice, the state space is often very large and complex, and thus function approximation is required to ensure generalization across different states. Deep neural networks with the ReLU activation offer a rich class of parameterized functions with differentiable parameters. Deep ReLU networks are state-of-the-art in many applications, e.g., \cite{krizhevsky2012imagenet,mnih2015human}, including offline RL with deep ReLU networks that can yield superior empirical performance \citep{voloshin2019empirical}. In this section, we describe the architecture of deep ReLU networks and the associated function space which we use throughout this paper. Specifically, a $L$-height, $m$-width ReLU network on $\mathbb{R}^d$ takes the form of 
\begin{align*}
    f_{\theta}^{L,m}(x) = W^{(L)} \sigma \left(  W^{(L-1)} \sigma \left( \hdots \sigma \left(W^{(1)} \sigma(x) + b^{(1)} \right) \hdots \right) + b^{(L-1)} \right) + b^{(L)},
\end{align*}
where $W^{(L)} \in \mathbb{R}^{1 \times m}, b^{(L)} \in \mathbb{R}, W^{(1)} \in \mathbb{R}^{m \times d}, b^{(1)} \in \mathbb{R}^m$, $W^{(l)} \in \mathbb{R}^{m \times m}, b^{(l)} \in \mathbb{R}^m, \forall 1 < l < L$, $\theta = \{W^{(l)}, b^{(l)}\}_{1 \leq l \leq L}$, and $\sigma(x) = \max\{x, 0\}$ is the (element-wise) ReLU activation. We define $\Phi(L, m, S,B)$ as the space of $L$-height, $m$-width ReLU functions $f_{\theta}^{L,m}(x)$ with sparsity constraint $S$, and norm constraint $B$, i.e., $\sum_{l=1}^L (\|W^{(l)}\|_0 + \| b^{(l)} \|_0) \leq S, \max_{1 \leq l \leq L} \|W^{(l)} \|_{\infty} \lor \|b^{(l)} \|_{\infty} \leq B$. Finally, for some $L,m \in \mathbb{N}$ and $S,B \in (0,\infty)$, we define the unit ball of ReLU network function space $\mathcal{F}_{NN}$ as 
\begin{align*}
    {\fnn(L,m,S,B)} := \bigg\{ f \in \Phi(L,m,S,B):  \| f \|_{\infty} \leq 1 \bigg\}. 
\end{align*}
In nonparametric regressions, \citet{suzuki2018adaptivity} showed that deep ReLU networks outperform any non-adaptive linear estimator due to their higher adaptivity to spatial inhomogeneity.

\subsection{Besov spaces}
Our new dynamic condition relies on the regularity of Besov spaces. 
There are several ways to characterize the smoothness in Besov spaces. Here, we pursue a characterization via {multivariate} moduli of smoothness as it is more intuitive, following \cite{gine2016mathematical}. 

\begin{defn}[\textit{{Multivariate} moduli of smoothness}]
For any $t > 0$
and $r \in \mathbb{N}$, the $r$-th {multivariate} modulus of smoothness of any function $f \in L^p(\mathcal{X}), p \in [1, \infty]$ is defined as
\begin{align*}
    \omega_r^{t,p}(f) := \sup_{0 \leq {\|h\|} \leq t} \| \Delta_h^r(f) \|_p,
\end{align*}
where 
$\Delta_h^r(f)$ is the $r$-th order translation-difference operator defined as 
\begin{align*}
    \Delta_h^r(f)(\cdot) := \sum_{k=0}^r {{r}\choose{k}} (-1)^{r-k} f(\cdot + k\cdot h). 
\end{align*}
\end{defn}


\begin{rem}
The quantity $\Delta_h^r(f)$ captures the local oscillation of $f$ which is not necessarily differentiable. In the case the $r$-th order weak derivative $D^r f$ exists and is locally integrable, we have 
\begin{align*}
    \lim_{ h \rightarrow 0} \frac{\Delta^r_h(f)(x)}{{\| h\|^r}} = D^r f(x). 
\end{align*}
It also follows from Minkowski's inequality that 
\begin{align*}
    \frac{\omega^{t,p}_r(f)}{ t^r} \leq \| D^r f \|_p \text{ and } \frac{\omega^{t,p}_{r+r'}(f)}{t ^r} \leq \omega^{t,p}_{r'}(D^r f). 
\end{align*}
\end{rem}

\begin{defn}[Besov space $B^{\alpha}_{p,q}(\mathcal{X})$]
For $1 \leq p,q \leq \infty$ and $\alpha > 0$, we define the norm $\|\cdot\|_{B^{\alpha}_{p,q}}$  of the Besov space $B^{\alpha}_{p,q}(\mathcal{X})$ as $\|f\|_{B^{\alpha}_{p,q}} := \|f\|_p + |f |_{B^{\alpha}_{p,q}}$ where 
\begin{align*}
    |f |_{B^{\alpha}_{p,q}} := 
    \begin{cases}
    \left( \displaystyle \int_{\mathbb{R}_+} \left(\frac{\omega_{\floor{\alpha} + 1}^{t,p}(f)}{t ^{\alpha}} \right)^q \frac{dt}{  t } \right)^{1/q}, & 1 \leq q < \infty, \\ 
    \sup_{t > 0} \displaystyle \frac{\omega_{\floor{\alpha} + 1}^{t,p}(f)}{  t ^{\alpha} }, & q = \infty, 
    \end{cases}
\end{align*}
is the Besov seminorm. Then, $B^{\alpha}_{p,q} := \{ f \in L^p(\mathcal{X}) : \|f \|_{B^{\alpha}_{p,q}} < \infty \}$. 
\end{defn}

Intuitively, the Besov seminorm $|f |_{B^{\alpha}_{p,q}}$ roughly describes the $L^q$-norm of the $l^p$-norm of the $\alpha$-order smoothness of $f$. Besov spaces are considerably general that subsume H\"older spaces and Sobolev spaces as well as functions with spatially inhomogeneous smoothness \citep{Triebel1983TheoryOF,besovbook,suzuki2018adaptivity,primer_besov,Nickl2007BracketingME}. In particular, the Besov space $B^{\alpha}_{p,q}$ reduces into the H\"older space $C^{\alpha}$ when $p=q=\infty$ and $\alpha$ is  positive and non-integer while it reduces into the Sobolev space $W^{\alpha}_2$ when $p=q=2$ and $\alpha$ is a positive integer. We further consider the unit ball of $B^{\alpha}_{p,q}(\mathcal{X})$ as $\bar{B}^{\alpha}_{p,q}(\mathcal{X}) := \{g \in B^{\alpha}_{p,q}: \|g \|_{B^{\alpha}_{p,q}} \leq 1 \text{ and } \|g\|_{\infty} \leq 1\}$. When the context is clear, we drop $\mathcal{X}$ from $\bar{B}^{\alpha}_{p,q}(\mathcal{X})$.

\section{Fitted Q-Iteration for Offline Reinforcement Learning}
 \begin{algorithm}[H]
      \begin{algorithmic}[1]
        \caption{Fitted Q-Iteration with Neural Network Function Approximation}
        \label{alg:LSVI}
        \State \textbf{Input: } Offline data  $\mathcal{D} = \{(s_i, a_i, s'_i, r_i)\}_{i=1}^n$, number of iterations $K$, function family $\gF_{\mathrm{NN}}$, target policy $\pi$ (for OPE Task only)
        \State Initialize $Q_0 \in \gF_{\mathrm{NN}}$\;
        
        \For{$k=1, \ldots, K$ }
            
            
            \State Compute the estimated state-action value $Q_k$ as
            \begin{align*}
            \begin{cases}
                 Q_k \leftarrow \argmin_{f \in \fnn} \frac{1}{n} \sum_{i=1}^n \left(f(s_i, a_i) - r_i - \gamma \mathbb{E}_{a' \sim \pi(\cdot|s'_i)} \left[ Q_{k-1}(s'_i, a) \right] \right)^2 & \text{ if OPE Task} \\
                  Q_k \leftarrow \argmin_{f \in \fnn} \frac{1}{n} \sum_{i=1}^n \left(f(s_i, a_i) - r_i - \gamma \max_{a \in \mathcal{A}} Q_{k-1}(s'_i, a) \right)^2 & \text{ if OPL Task}
            \end{cases}
            \end{align*}
               
        \EndFor
        \State \textbf{Output:} Return the following estimates
        \begin{align*}
            \begin{cases}
                     V_K \leftarrow \|Q_K \|_{ \rho^{\pi}} := \displaystyle \sqrt{ \mathbb{E}_{\rho(s) \pi(a|s)} \left[ Q_K(s,a)^2 \right]} & \text{ if OPE Task} \\ 
                     \pi_K(\cdot|s) \leftarrow \argmax_{a} Q_K(a|s) & \text{ if OPL Task}
            \end{cases}
        \end{align*}

           
        \end{algorithmic}
        \end{algorithm}

In this work, we study a variant of fitted Q-iteration (FQI) algorithm for offline RL, presented in Algorithm \ref{alg:LSVI}. This algorithm is appealingly simple as it iteratively constructs $Q$-estimate from the offline data and the previous $Q$-estimate, as in Algorithm \ref{alg:LSVI}. 
This FQI-style algorithm has been largely studied for offline RL, such as \cite{DBLP:journals/jmlr/MunosS08,DBLP:conf/icml/ChenJ19,duan2021risk} to name a few; yet there has been no work studying this algorithm in offline RL with neural network function approximation except \cite{DBLP:journals/corr/abs-1901-00137}. However, \citet{DBLP:journals/corr/abs-1901-00137} use data splitting and rely on a more limited dynamic condition than ours. Thus, the notable difference in Algorithm \ref{alg:LSVI} is the use of neural network to approximate $Q$-functions and we estimate each $Q_k$ using the entire offline data set, instead of splitting the data into disjoint sets as in \cite{DBLP:journals/corr/abs-1901-00137}. In particular, \citet{DBLP:journals/corr/abs-1901-00137} split the offline data into $K$ disjoint sets, resulting in the sample complexity linearly scaled with $K$, which is highly inefficient 
{in long (effective) horizon problems where the effective horizon length $H = 1/(1-\gamma)$ is large.} 

As we do not split the data into disjoint sets, a correlated structure is induced. Specifically, at each iteration $k$ in Algorithm \ref{alg:LSVI}, {$Q_{k-1}$ also depends on $(s_i, a_i)$ which makes $\sE \left[r_i + \gamma \max_{a} Q_{k-1}(s'_i, a)  \right] \neq [T^*Q_{k-1}](s_i, a_i)$ in OPL Task (and $\sE \left[r_i + \gamma \mathbb{E}_{a \sim \pi(\cdot|s'_i)} \left[ Q_{k-1}(s'_i, a) \right] \right] \neq [T^{\pi}Q_{k-1}](s_i, a_i)$ in OPE Task, respectively).}  
This correlated structure hinders a direct use of the standard concentration inequalities {such as Bernstein's inequality that require a sequence of random variables to adapt to certain filtration}. We overcome this technical difficulty using uniform convergence argument. 

In our analysis, we assume access to the minimizer of the optimization in Algorithm \ref{alg:LSVI}. In practice, we can use (stochastic) gradient descent to effectively solve this optimization {with $L_0$ regularization \citep{DBLP:journals/corr/abs-1712-01312}}. {If the $L_0$ constraint is relaxed in practice}, (stochastic) gradient descent is guaranteed to converge to a global minimum under certain structural assumptions \citep{DBLP:conf/icml/DuLL0Z19,DBLP:conf/iclr/DuZPS19,allen2019convergence,nguyen2021proof}.
\section{Main Result}
\label{section:main_results}
To obtain a non-trivial guarantee, certain assumptions on the distribution shift and the MDP regularity are necessary. 
We introduce the assumptions about the data generation in Assumption \ref{assumption:concentration_coefficient} and the regularity of the underlying MDP \ref{assumption:completeness}.

\begin{assumption}[\textit{Uniform concentrability coefficient \citep{DBLP:journals/jmlr/MunosS08}}]
\label{assumption:concentration_coefficient} $\exists \kappa_{\mu} < \infty$ such that $ \displaystyle \left\| \frac{d\nu}{d\mu} \right\|_{\infty} \leq \kappa_{\mu}$ for any \textit{admissible} distribution $\nu$. \footnote{$\nu$ is said to be admissible if there exist $t \geq 0$ and policy $\bar{\pi}$ such that $\nu(s,a) = \mathbb{P}(s_t = s, a_t=a|s_1 \sim \rho, \bar{\pi}), \forall s,a$.} 
\end{assumption}
\noindent The finite $\kappa_{\mu}$ in Assumption \ref{assumption:concentration_coefficient} asserts that the sampling distribution $\mu$ is not too far away from any admissible distribution, which holds for a reasonably large class of MDPs, e.g., for any finite MDP, any MDP with bounded transition kernel density, and equivalently any MDP whose top-Lyapunov exponent is negative. We present a simple (though stronger than necessary) example for which Assumption \ref{assumption:concentration_coefficient} holds. 
\begin{eg}
If there exist absolute constants $c_1, c_2> 0$ such that for any $s,s' \in \mathcal{S}$, there exists an action $a \in \mathcal{A}$ such that $P(s'|s,a) \geq 1/c_1$ and $\eta(a|s) \geq 1 / c_2, \forall s,a$, then we can choose $\kappa_{\mu} = c_1 c_2$. 
\end{eg}

\citet{DBLP:conf/icml/ChenJ19} further provided natural problems with  rich observations generated from hidden states that has a low concentrability coefficient. These suggest that low concentrability coefficients can be found in fairly many interesting problems in practice.

%
We now state the assumption about the regularity of the underlying MDP.
\begin{assumption}[\textit{Besov dynamic closure}]
Consider some fixed $p,q \in [1, \infty]$ and $\alpha > \frac{d}{\min\{p,2\}}$. 
\begin{itemize}
    \item \underline{For OPE Task}: For a target policy $\pi$, and for some $(L,m,S,B) \in \sN \times \sN \times \sN \times \sR_{+}$ (which will be specified later) we assume that: $\forall f \in \fnn(L,m,S,B) \implies T^{\pi} f \in \bar{B}^{\alpha}_{p,q}$. 
    \item \underline{For OPL Task}: For some $(L,m,S,B) \in \sN \times \sN \times \sN \times \sR_{+}$ (which will be specified later) we assume that: $\forall f \in \fnn(L,m,S,B) \implies T^* f \in \bar{B}^{\alpha}_{p,q}$.
\end{itemize}
\label{assumption:completeness}
\end{assumption}


Assumption \ref{assumption:completeness} signifies that {for OPL task (for OPE task with target policy $\pi$, respectively) the Bellman operator $T^*$ ($T^{\pi}$, respectively) applied on any ReLU network function in $\fnn(L,m,S,B)$ results in a new function that sits in $\bar{B}^{\alpha}_{p,q}(\mathcal{X})$.} 
The smoothness constraint $\alpha > \frac{d}{\min\{p,2\}}$ is necessary to guarantee the compactness and the finite (local) Rademacher complexity of the Besov space, and $\alpha - d/p$ is called the \textit{differential dimension} of the Besov space. Note that when $p < 2$ (thus the condition above becomes $\alpha > d/p$), a function in the corresponding Besov space contains both spiky parts and smooth parts, i.e., the Besov space has {inhomogeneous} smoothness \citep{suzuki2018adaptivity}. 


Our Besov dynamic closure is sufficiently general that subsumes almost all the previous completeness assumptions in the literature. For example,  a simple (yet considerably stronger than necessary) sufficient condition for Assumption \ref{assumption:completeness} is that the expected reward function $r(s,a)$ and the transition density $P(s'|s,a)$ for each fixed $s'$ are functions in $\bar{B}^{\alpha}_{p,q}$, regardless of any input function $f$ and any target policy $\pi$. \footnote{This sufficient condition imposes the smoothness constraint solely on the underlying MDP regardless of the input function $f$. Thus, the ``max'' over the input function $f(s,a)$ does not affect the smoothness of the resulting function after $f$ is passed through the Bellman operator. This holds regardless of whether $f$ is in the Besov space.} Such a condition on the transition dynamic is common in the RL literature; for example, linear MDPs \citep{jin2020provably} posit a linear structure on the expected reward and the transition density as $r(s,a) = \langle \phi(s,a), \theta \rangle$ and $P(s'|s,a) = \langle \phi(s,a), \lambda(s') \rangle$ for some feature map $\phi: \mathcal{X} \rightarrow \mathbb{R}^{d_0}$ and signed measures $\lambda(s') = (\lambda(s')_1, \ldots, \lambda(s')_{d_0})$.  To make it even more concrete, we present the {following} examples for Assumption \ref{assumption:completeness}. 

\begin{eg}[\textit{Reproducing kernel Hilbert space (RKHS)}]
Define $k_{h, l}$ the Mat\'ern kernel with smoothness parameter $h > 0$ and length scale $l > 0$. If both $r(\cdot)$ and $g_{s'}( \cdot ) := P(s' | \cdot)$ at any $s' \in \mathcal{S}$ are functions in the RKHS of Mat\'ern kernel $k_{h, l}$ where $h = \alpha - d/2 > 0$ and $l > 0$, then Assumption \ref{assumption:completeness} holds for $p=q=2$. \footnote{This is due to the norm-equivalence between the above RKHS and the Sobolev space $W^{\alpha}_2(\mathcal{X})$ \citep{kanagawa2018gaussian} and the degeneration from Besov spaces to Sobolev spaces as $B^{\alpha}_{2,2}(\mathcal{X}) = W^{\alpha}_2(\mathcal{X})$.} Moreover, this particular case is equivalent to the dynamic condition considered in \cite{DBLP:journals/corr/abs-1901-00137}. 
\end{eg}


\begin{eg}[Reduction to linear MDPs]
Linear MDPs \citep{jin2020provably} correspond to Assumption \ref{assumption:completeness} with $\alpha=1$ and $p=q$ on a $p$-norm bounded domain. \footnote{However, linear MDPs do not require the smoothness constraint $\alpha > \frac{d}{\min\{p, 2\}}$ to ensure a finite Rademacher complexity of linear models. Of course, our analysis addresses significantly more complex and general settings than linear MDPs which we believe is more important than recovering this particular condition of linear MDPs.}
\end{eg}

{Note that Assumption \ref{assumption:completeness} even allows the expected rewards $r(\cdot)$ and the transition densities $g_{s'}( \cdot ) := P(s' | \cdot)$ to contain both spiky parts and smooth parts, i.e., inhomogeneous smoothness, as long as $p < 2$ (thus the constraint condition becomes $\alpha > d/p$).} 
\noindent We are now ready to present our main result. 

\begin{thm}
Under Assumption \ref{assumption:concentration_coefficient} and Assumption \ref{assumption:completeness} {for some $(L,m,S,B)$ satisfying (\ref{eq: network architecture specs})}, for any $\epsilon > 0, \delta \in (0,1], K > 0$, if $n$ satisfies that $n \gtrsim \left(\frac{1}{\epsilon^2} \right)^{1 + \frac{d}{\alpha}} \log^6 n + \frac{1}{\epsilon^2}(\log(1/\delta) + \log \log n)$, then with probability at least $1 - \delta$, the sup-optimality of Algorithm \ref{alg:LSVI} is 
\begin{align*}
    \begin{cases}
    \displaystyle \subopt(V_K; \pi) \leq \frac{ \sqrt{ \kappa_{\mu}}}{1-\gamma} \epsilon + \frac{ \gamma^{K/2}}{(1-\gamma)^{1/2}} &\text{ for OPE}, \\ 
    \displaystyle \subopt(\pi_K) \leq \frac{2 \gamma \sqrt{ \kappa_{\mu}}}{(1-\gamma)^2} \epsilon + \frac{2 \gamma^{1 + K/2}}{(1-\gamma)^{3/2}} &\text{ for OPL.}
    \end{cases}
\end{align*}
In addition, the optimal deep ReLU network $\Phi(L,m,S,B)$ that obtains such sample complexity (for both OPE and OPL) satisfies 
\begin{align}
     L \asymp \log N, m \asymp N \log N, S \asymp N, \text{ and } B \asymp N^{1/d + (2 \iota)/ (\alpha - \iota) }, 
     \label{eq: network architecture specs}
\end{align}
where 
    $N \asymp n^{\frac{ 1/2 + \left(2 + d^2 / (\alpha(\alpha + d)) \right)^{-1} }{1 + 2 \alpha /d } } \text{ and } \iota := d(p^{-1} - (1 + \floor{\alpha})^{-1})_{+}$. 
\label{thm:sample_complexity}
\end{thm}

As the complete form of Theorem \ref{thm:sample_complexity} is quite involved, we interpret and disentangle this result to understand FQI algorithms with neural network function approximation for offline RL tasks. The sub-optimality in both OPE and OPL consists of the statistical error (the first term) and the algorithmic error (the second term). While the algorithmic error enjoys the fast linear convergence to $0$ as $K$ gets large, the statistical error reflects the fundamental difficulty of our problems. To make it more interpretable, we present a simplified version of Theorem \ref{thm:sample_complexity} where we state the sample complexity required to obtain a sub-optimality within $\epsilon$. 

\begin{prop}[Simplified version of Theorem \ref{thm:sample_complexity}]
For any {$K \gtrsim H \log(1/\eps)$}, the sample complexity of Algorithm \ref{alg:LSVI} for OPE Task and OPL Task is {$n = \tilde{\mathcal{O}}( H^{2 + 2 \frac{d}{\alpha}} \kappa_{\mu}^{1 + \frac{d}{\alpha}} \epsilon^{-2 - 2\frac{d}{\alpha}} )$} and $n = \tilde{\mathcal{O}}( H^{4 + 4 \frac{d}{\alpha}} \kappa_{\mu}^{1 + \frac{d}{\alpha}} \epsilon^{-2 - 2\frac{d}{\alpha}} )$, respectively.
Moreover, the optimal deep ReLU network $\Phi(L,m,S,B)$ {for both OPE and OPL Tasks} that obtains such sample complexity is 
    $L = \mathcal{O}(\log n), m = \mathcal{O}(n^{2/5}\log n), S = \mathcal{O}(n^{2/5}), \text{ and } \log B = \mathcal{O}(\frac{n^{2/5}}{d} ). $
\label{prop: simplied version of the main theorem}
\end{prop}

To discuss our result, we compare it with other existing works in Table \ref{tab:compare_literature}. 
As the literature of offline RL is vast, we only compare with representative works of FQI estimators for offline RL with function approximation under a uniform data coverage assumption, as they are directly relevant to our work that uses FQI estimators with neural network function approximation under uniform data coverage. Here, our sample complexity does not scale with the number of states as in tabular MDPs \citep{DBLP:conf/aistats/YinW20,DBLP:conf/aistats/YinBW21,yin2021characterizing} or the inherent Bellman error as in the general function approximation \citep{DBLP:journals/jmlr/MunosS08,DBLP:conf/icml/0002VY19,duan2021risk}. Instead, it explicitly scales with the (possible fractional) smoothness $\alpha$ of the underlying MDP, the dimension $d$ of the input space, {the distributional shift measure $\kappa_{\mu}$ and the effective episode length $H = (1- \gamma)^{-1}$}. Importantly, this guarantee is established under the Besov dynamic closure that subsumes the dynamic conditions of the prior results. 
{Compared to \cite{DBLP:journals/corr/abs-1901-00137}, our sample complexity has a strong advantage in long (effective) horizon problems where $H > \frac{d}{\alpha - 2d} \log(1/\eps)$ \footnote{This condition is often easily satisfied as in practice we commonly set $\gamma = 0.99$ and $\eps = 0.001$, thus we have $H=100$ and $\log(1/\eps) = 3$.} and improves it by a factor of $H^{1 - 2d/\alpha} \eps^{-d/ \alpha} \log(H^2/\eps^2)$. It also suggests that the data splitting in \cite{DBLP:journals/corr/abs-1901-00137} should be preferred for short (effective) horizon problems.} Though our bound has a tighter dependence on $H$ in the long horizon setting, the dependence on $\eps$ in our bound is compromised and does not match the minimax rate in the regression setting. We leave as future direction to construct the lower bound for the data-reuse setting of offline RL.

\paragraph{On the role of deep ReLU networks in offline RL.}
We make several remarks about the role of deep networks in offline RL. The role of deep ReLU networks in offline RL is to guarantee a maximal adaptivity to the (spatial) regularity of the functions in Besov space and obtain an optimal approximation error rate that otherwise were not possible with other function approximation such as kernel methods \citep{suzuki2018adaptivity}. Moreover, by the equivalence in the functions that a neural architecture can compute \citep{yarotsky2017error}, Theorem \ref{thm:sample_complexity} also readily holds for any other continuous piece-wise linear activation functions with finitely many line segments $M$ where the optimal network architecture only increases the number of units and weights by constant factors depending only on $M$. Moreover, we observe that the optimal ReLU network is relatively ``thinner'' than overparameterized neural networks that have been recently studied in the literature \citep{arora2019exact,allen2019convergence,hanin2019finite,cao2019generalization,belkin2021fit} where the width $m$ is a high-order polynomial of $n$. As overparameterization is a key feature for such overparameterized neural networks to obtain a good generalization, it is natural to ask why a thinner neural network in Theorem \ref{thm:sample_complexity} also guarantees a strong generalization for offline RL? Intuitively, the optimal ReLU network in Theorem \ref{thm:sample_complexity} is regularized by a strong sparsity which resonates with our practical wisdom that a sparsity-based regularization prevents over-fitting and achieve a better generalization. Indeed, as the total number of parameters in the considered neural network is $p = md + m + m^2(L - 2) = \mathcal{O}(N^2 \log^3 N)$ while the number of non-zeros parameters $S$ only scales with $N$, the optimal ReLU network in Theorem \ref{thm:sample_complexity} is relatively sparse.

\section{Technical Review}
In this section, we highlight the key technical challenges
in our analysis. In summary, two key technical challenges in our analysis are rooted in the consideration of the correlated structure in value regression in Algorithm \ref{alg:LSVI}, and the use of deep neural network as function approximation (and their combination). To address these challenges, {we devise}
the so-called double uniform convergence argument and {leverage} a localization argument via sub-root functions for local Rademacher complexities. In what follows, we briefly discuss these technical challenges and 
{our analysis approach}. 

The analysis and technical proofs of \cite{DBLP:journals/corr/abs-1901-00137,DBLP:conf/icml/0002VY19} heavily rely on the equation $\sE \left[r_i + \gamma \mathbb{E}_{a' \sim \pi(\cdot|s'_i)} \left[ Q_{k-1}(s'_i, a) \right] \right] = [T^*Q_{k-1}](s_i, a_i)$
to leverage the standard nonparametric regression techniques (in a supervised learning setting). However, the correlated structure in Algorithm \ref{alg:LSVI} implies $\sE \left[r_i + \gamma \mathbb{E}_{a' \sim \pi(\cdot|s'_i)} \left[ Q_{k-1}(s'_i, a) \right] \right] \neq [T^*Q_{k-1}](s_i, a_i)$ as $Q_{k-1}$ also depends on $(s_i, a_i)$. 
Thus, the techniques in these prior works could not be used here and we require a new analysis. It is worth noting that \citet{DBLP:conf/icml/0002VY19} also re-use the data as in Algorithm \ref{alg:LSVI} (instead of data splitting as in \cite{DBLP:journals/corr/abs-1901-00137}) but mistakenly assume that $\sE \left[r_i + \gamma \mathbb{E}_{a' \sim \pi(\cdot|s'_i)} \left[ Q_{k-1}(s'_i, a) \right] \right] = [T^*Q_{k-1}](s_i, a_i)$. 
To deal with the correlated structure, we 
{devise} a \textit{double} uniform convergence argument. The double uniform convergence argument is appealingly intuitive: while in a standard regression problem, the (single) uniform convergence argument seeks the generalization guarantee uniformly over an entire hypothesis space of a data-dependent empirical risk minimizer, in the value regression problem of Algorithm \ref{alg:LSVI}, we additionally guarantee generalization uniformly over the hypothesis space of the data-dependent regression target $T^*Q_{k-1}$. 
To make it concrete, we highlight a key equality in our proof where the double uniform convergence argument is used: 
\begin{align*}
    \max_{k} \| Q_{k+1} - T^*Q_k \|_{\mu}^2 = \underbrace{\sup_{Q \in \fnn} (\mathbb{E} - \mathbb{E}_n)(l_{\hat{f}^Q} - l_{f_*^Q})}_{I_1, \text{empirical process term}} + \underbrace{\sup_{Q \in \fnn} \mathbb{E}_n(l_{f_{\perp}^Q} - l_{f_*^Q})}_{I_2, \text{bias term}}, 
\end{align*}
where $f_*^Q(x) = \mathbb{E}[r + \gamma  \max_{a'} Q(s', a')|x]$, and  $f_{\perp}^Q := \arginf_{f \in \fnn} \|f - f_*^Q \|_{2, \mu}$, and {$l_{f_{\perp}^Q} := (f_{\perp}^Q(x_1) - r_1 - \gamma  \max_{a'} Q(s'_1, a'))^2$ and $l_{f_{*}^Q} := (f_{*}^Q(x_1) - r_1 - \gamma  \max_{a'} Q(s'_1, a'))^2$ are random variables with respect to the randomness of $(x_1, s'_1, r_1).$} {We have learned that a similar general idea of the double uniform convergence argument has been leveraged in \cite{chen2019information} for general function classes. We remark they use finite function classes, and in our case, the double uniform convergence argument is particularly helpful in dealing with local Rademacher complexities under a data-dependent structure as local Rademacher complexities already involve the supremum operator which can be naturally incorporated with the double uniform convergence argument.}


The double uniform convergence argument also requires a different technique to control an empirical process term $I_1$ as it now requires a uniform convergence over the regression target. We leverage local Rademacher complexities to derive a bound on $I_1$:
\begin{align*}
    &\sup \{(\mathbb{E} - \mathbb{E}_n)(l_{\hat{f}^Q} - l_{f_*^Q}): Q \in \fnn, \|\hat{f}^Q - f_*^Q\|^2_{\mu} \leq r \} \\
     &\leq 6 \mathbb{E} R_n \left\{f - g: f \in \fnn, g \in T^{*} \fnn, \|f - g \|_{\mu}^2 \leq r \right \} + 2 \sqrt{ \frac{2 r \log(1/\delta) }{n} } + \frac{28 \log(1/\delta)}{3n}. 
\end{align*}
where $R_n$ is the local Rademacher complexity \citep{bartlett2005}. An explicit bound is then derived via a localization argument and the fixed point of a sub-root function. 

The use of neural networks pose a new challenge mainly in bounding the bias term $I_2$. We derive this bound using the adaptivity of deep ReLU network to the regularity in Besov spaces, leveraging our Besov dynamic condition in Assumption \ref{assumption:completeness}. Bounding the bias term also requires the use of a concentration inequality. While \citet{DBLP:conf/icml/0002VY19} use Bernstein's inequality, our bias term $I_2$ requires a uniform convergence version of Bernstein's inequality as $I_2$ requires a guarantee uniformly over $\fnn$.  We omit a detailed proof for Theorem \ref{thm:sample_complexity} to Section \ref{Appendix:A}. 

\section{Conclusion and Discussion}
\label{section:discussion}
We presented the sample complexity of FQI estimators for offline RL with deep ReLU network function approximation under a uniform data coverage assumption. We proved that the FQI-type algorithm achieved the sample complexity of $n = \tilde{\mathcal{O}}( H^{4 + 4 \frac{d}{\alpha}} \kappa_{\mu}^{1 + \frac{d}{\alpha}} \epsilon^{-2 - 2\frac{d}{\alpha}})$ under a correlated structure and a general dynamic condition namely the Besov dynamic closure. In addition, we corrected the mistake in ignoring the correlated structure when reusing data with FQI estimators in \cite{DBLP:conf/icml/0002VY19}, avoided the possibly inefficient data splitting technique in \cite{DBLP:journals/corr/abs-1901-00137} for {long (effective) horizon problems}, and proposed a general dynamic condition that subsumes all the previous Bellmen completeness assumptions. In the following, we discuss future directions.




\paragraph{Relaxing the assumption about uniform data coverage.} 
For a future work, we can include the pessimistic approach in \cite{jin2020pessimism,rashidinejad2021bridging,uehara2021pessimistic,nguyen2021offline} to the current work with a more involved analysis of uncertainty quantifiers under non-linear function approximation to relax the strictness of the uniform data coverage assumption. 

\paragraph{Relaxing the assumption about optimization oracle.} The present work assumes access to the optimization oracle when fitting a neural network to the offline data. It is desirable to understand how optimization and generalization of a trained neural network can contribute to offline RL with neural function approximation. A promising approach to obtain a tight trajectory-dependent sub-optimality bound of offline RL with neural function approximation is to characterize the SGD-based optimization via a stochastic differential
equation by allowing the stochastic noises to follow the fractional Brownian motion \cite{tan2022trajectory,tong2022learning}.  


\section*{Acknowledgements}
We thank our anonymous reviewers and our action editor Yu-Xiang Wang (UC Santa Barbara) at TMLR for the constructive comments and feedback. Thanh Nguyen-Tang thank Le Minh Khue Nguyen (University of Rochester) for the support of this project during the COVID times. 

\bibliography{main}
\bibliographystyle{tmlr}

\appendix
\section{Appendix}

\renewcommand{\thesection}{A}
\section{Proof of Theorem \ref{thm:sample_complexity}}
\label{Appendix:A}

We now provide a complete proof of Theorem \ref{thm:sample_complexity}. The proof has four main components: a sub-optimality decomposition for error propagation across iterations, a Bellman error decomposition using a uniform convergence argument, a deviation analysis for least squares with deep ReLU networks using local Rademacher complexities and a localization argument, and a upper bound minimization step to obtain an optimal deep ReLU architecture. 

\subsubsection*{Step 1: A sub-optimality decomposition}
The first step of the proof is a sub-optimality decomposition, stated in Lemma \ref{sub_opt_decompose}, that applies generally to any least-squares Q-iteration methods. 
\begin{lem}[\textit{A sub-optimality decomposition}]
Under Assumption \ref{assumption:concentration_coefficient},
the sub-optimality of $V_K$ returned by Algorithm \ref{alg:LSVI} is bounded as 
\begin{align*}
    \subopt(V_K)  \leq
    \begin{cases}
    \frac{\sqrt{\kappa_{\mu}}}{1-\gamma} \displaystyle \max_{0 \leq k \leq K-1} \| Q_{k+1} - T^{\pi} Q_k \|_{\mu} + \frac{ \gamma^{K/2}}{(1-\gamma)^{1/2}} &\text{ for OPE}, \\ 
    \frac{4 \gamma \sqrt{\kappa_{\mu}}}{(1-\gamma)^2} \displaystyle \max_{0 \leq k \leq K-1} \| Q_{k+1} - T^* Q_k \|_{\mu} + \frac{4 \gamma^{1 + K/2}}{(1-\gamma)^{3/2}} &\text{ for OPL}.
    \end{cases}
\end{align*}
\label{sub_opt_decompose}
\end{lem}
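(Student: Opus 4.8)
The plan is to prove Lemma~\ref{sub_opt_decompose} by the classical error-propagation argument for fitted value iteration, carried out in the $\mu$-weighted $L^2$ geometry and driven solely by Assumption~\ref{assumption:concentration_coefficient}; since no property of $\mathcal{F}_{NN}$ or of the regression step is used, the bound holds for any least-squares $Q$-iteration scheme. Write $\varepsilon_k := Q_{k+1} - T^{\pi}Q_k$ for OPE (resp.\ $\varepsilon_k := Q_{k+1} - T^{*}Q_k$ for OPL), the one-step Bellman residual at iteration $k$, so the goal is to bound $\text{SubOpt}$ by (a prefactor times) $\max_{0\le k\le K-1}\|\varepsilon_k\|_\mu$ plus a geometric remainder in $K$.

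For OPE, I would first reduce the value error to the weighted $Q$-error: using $V_K=\|Q_K\|_{\rho^\pi}$, the triangle inequality in $L^2(\rho^\pi)$, and $V^\pi\le\|Q^\pi\|_{\rho^\pi}$, it suffices to control $\|Q^\pi-Q_K\|_{\rho^\pi}$. Next, unroll the identity $Q^\pi-Q_{k+1}=T^\pi Q^\pi-T^\pi Q_k-\varepsilon_k=\gamma P^\pi(Q^\pi-Q_k)-\varepsilon_k$, where $P^\pi$ is the sub-stochastic transition-then-policy averaging operator, to get $Q^\pi-Q_K=-\sum_{k=0}^{K-1}\gamma^{K-1-k}(P^\pi)^{K-1-k}\varepsilon_k+\gamma^K(P^\pi)^K(Q^\pi-Q_0)$. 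Then square the $\|\cdot\|_{\rho^\pi}$-norm, apply the convexity bound $\big(\sum_i w_i b_i\big)^2\le\big(\sum_i w_i\big)\big(\sum_i w_i b_i^2\big)$ with weights $w_j=\gamma^j$, push each $(P^\pi)^j$ inside the square by Jensen, and invoke Assumption~\ref{assumption:concentration_coefficient} to pass from the (realizable) occupancy measure $\rho^\pi(P^\pi)^j$ to $\mu$ at cost $\kappa_\mu$; together with $\|(P^\pi)^K(Q^\pi-Q_0)\|_\infty\le 1$ and $\sum_{j\ge0}\gamma^j=\tfrac1{1-\gamma}$ this gives $\|Q^\pi-Q_K\|_{\rho^\pi}^2\le\frac{\kappa_\mu}{(1-\gamma)^2}\max_k\|\varepsilon_k\|_\mu^2+\frac{\gamma^K}{1-\gamma}$, and subadditivity of $\sqrt{\cdot}$ yields the stated $\frac{\sqrt{\kappa_\mu}}{1-\gamma}\max_k\|\varepsilon_k\|_\mu+\frac{\gamma^{K/2}}{(1-\gamma)^{1/2}}$.

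For OPL the new ingredient is the passage from the $Q$-error to the sub-optimality of the greedy policy $\pi_K$. I would bound $\text{SubOpt}(\pi_K)=\mathbb{E}_\rho[V^*-Q^*(\cdot,\pi_K)]\le\mathbb{E}_\rho[V^*-V^{\pi_K}]$ (since $Q^{\pi_K}\le Q^*$) and then use the classical greedy-improvement estimate $V^*-V^{\pi_K}\le\frac{2\gamma}{1-\gamma}\cdot(\text{change of measure})\cdot\|Q^*-Q_K\|$ --- valid because $\pi_K$ is greedy w.r.t.\ $Q_K$ --- which contributes the extra factor $\frac{\gamma}{1-\gamma}$ and hence the $(1-\gamma)^{-2}$ and $(1-\gamma)^{-3/2}$ in the final bound. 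The propagation of $\|Q^*-Q_K\|$ mirrors the OPE computation with $T^*$ in place of $T^\pi$: from $|T^*Q^*-T^*Q_k|(s,a)\le\gamma\,\mathbb{E}_{s'\sim P(\cdot|s,a)}\big[\max_{a'}|Q^*-Q_k|(s',a')\big]$, unrolling produces compositions of ``transition-then-worst-action'' sub-stochastic kernels; each such composite measure, started from a realizable distribution, is again realizable, so Jensen and Assumption~\ref{assumption:concentration_coefficient} apply verbatim, and the same convexity-plus-geometric-summation bookkeeping delivers the OPL inequality with its constants (the $4$ absorbing the numerical factors).

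I expect the main obstacle to be exactly this bookkeeping around the optimality operator $T^*$ in the OPL case: one must verify that every occupancy measure arising in the unrolled sum --- state distributions pushed forward by the (generally non-stationary) action choices dictated by the $\arg\max$ inside $T^*$ --- falls under the notion of ``realizable $\nu$'' in Assumption~\ref{assumption:concentration_coefficient}, so that the single constant $\kappa_\mu$ governs all change-of-measure steps and does not compound across the $K$ iterations (a naive per-iteration change of measure would produce a useless $\kappa_\mu^{K/2}$). Securing the single $\sqrt{\kappa_\mu}$ and the precise powers of $1-\gamma$ is where the care lies; the remaining pieces (triangle inequality, Jensen, the convexity bound, geometric summation) are routine.
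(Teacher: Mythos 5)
Your proposal follows essentially the same route as the paper's proof: unroll the Bellman residuals via the contraction of $T^{\pi}$ (resp.\ $T^{*}$), apply Jensen with normalized geometric weights, change measure once per term using Assumption~\ref{assumption:concentration_coefficient} at cost $\kappa_{\mu}$, and for OPL pass from $\|Q^{*}-Q_K\|$ to $\text{SubOpt}(\pi_K)$ via the greedy-improvement bound through $(I-\gamma P^{\pi_K})^{-1}(P^{\pi^{*}}-P^{\pi_K})$, which is exactly where the extra $\gamma/(1-\gamma)$ and the factor $4$ come from. The only cosmetic difference is that the paper propagates $Q^{*}-Q_K$ by sandwiching it between two signed sums driven by the greedy policies $\pi_k$ and $\pi^{*}$ rather than by your pointwise bound $|T^{*}Q^{*}-T^{*}Q_k|\le\gamma\,\mathbb{E}_{s'}[\max_{a'}|Q^{*}-Q_k|]$; both make every change-of-measure distribution realizable so that a single $\kappa_{\mu}$ suffices, as you correctly anticipated.
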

The lemma states that the sub-optimality decomposes into a statistical error (the first term) and an algorithmic error (the second term). While the algorithmic error enjoys the fast linear convergence rate, the statistical error arises from the distributional shift in the offline data and the estimation error of the target $Q$-value functions due to finite data. Crucially, the contraction of the (optimality) Bellman operators $T^{\pi}$ and $T^*$ allows the sup-optimality error at the final iteration $K$ to propagate across all iterations $k \in [0,K-1]$. Note that this result is agnostic to any function approximation form and does not require Assumption \ref{assumption:completeness}. The result uses a relatively standard argument that appears in a number of works on offline RL \citep{DBLP:journals/jmlr/MunosS08,DBLP:conf/icml/0002VY19}. 
\begin{proof}[Proof of Lemma \ref{sub_opt_decompose}]
We will prove the sup-optimality decomposition for both settings: OPE and OPL.  

\paragraph{(i) For OPE.} 
We denote the right-linear operator by $P^{\pi} \cdot : \{\mathcal{X} \rightarrow \mathbb{R} \} \rightarrow \{\mathcal{X} \rightarrow \mathbb{R} \}$ where
\begin{align*}
    (P^{\pi}f)(s,a) := \int_{\mathcal{X}} f(s',a')\pi(da'|s') P(ds'|s,a), 
\end{align*}
for any $f \in \{\mathcal{X} \rightarrow \mathbb{R} \}$. Denote Denote $\rho^{\pi}(ds da) = \rho(ds) \pi(da|s)$. Let $\epsilon_k := Q_{k+1} - T^{\pi} Q_k, \forall k \in [0,K-1]$ and $\epsilon_K = Q_0 - Q^{\pi}$. Since $Q^{\pi}$ is the (unique) fixed point of $T^{\pi}$, we have 
\begin{align*}
    Q_k - Q^{\pi} &= T^{\pi} Q_{k-1} - T^{\pi} Q^{\pi} + \epsilon_{k-1} = \gamma P^{\pi}(Q_{k-1} - Q^{\pi}) + \epsilon_{k-1}.
\end{align*}
By recursion, we have 
\begin{align*}
     Q_K - Q^{\pi} &= \sum_{k=0}^K (\gamma P^{\pi})^k \epsilon_k = \frac{1 - \gamma^{K+1}}{1 - \gamma} \sum_{k=0}^K \alpha_k A_k \epsilon_k  
\end{align*}
where $\alpha_k :=  \frac{(1-\gamma) \gamma^k}{1 - \gamma^{K+1}}, \forall k \in [K]$ and $A_k := (P^{\pi})^k,  \forall k \in [K]$. 
Note that $\sum_{k=0}^K \alpha_k = 1$ and $A_k$'s are probability kernels. Denoting by $|f|$ the point-wise absolute value $|f(s,a)|$, we have that the following inequality holds point-wise: 
\begin{align*}
    |Q_K - Q^{\pi}| \leq \frac{1-\gamma^{K+1}}{1-\gamma} \sum_{k=0}^K \alpha_k A_k |\epsilon_k|.
\end{align*}
We have 
\begin{align*}
    \|Q_K - Q^{\pi}\|^2_{\rho^{\pi}} &\leq \frac{(1-\gamma^{K+1})^2}{(1-\gamma)^2} \int \rho(ds) \pi(da|s) \left( \sum_{k=0}^K \alpha_k A_k |\epsilon_k|(s,a)\right)^2 \\ 
    &\overset{(a)}{\leq} \frac{(1-\gamma^{K+1})^2}{(1-\gamma)^2} \int \rho(ds) \pi(da|s) \sum_{k=0}^K \alpha_k  A_k^2 \epsilon_k^2(s,a) \\
    &\overset{(b)}{\leq} \frac{(1-\gamma^{K+1})^2}{(1-\gamma)^2} \int \rho(ds) \pi(da|s) \sum_{k=0}^K \alpha_k  A_k \epsilon_k^2(s,a) \\
    &\overset{(c)}{\leq} \frac{(1-\gamma^{K+1})^2}{(1-\gamma)^2} \left( \int \rho(ds) \pi(da|s) \sum_{k=0}^{K-1}\alpha_k  A_k \epsilon_k^2(s,a) + \alpha_K \right) \\
    &\overset{(d)}{\leq} \frac{(1-\gamma^{K+1})^2}{(1-\gamma)^2} \left( \int \mu(ds,da) \sum_{k=0}^{K-1} \alpha_k  \kappa_{\mu} \epsilon_k^2(s,a) + \alpha_K \right) \\ 
    &= \frac{(1-\gamma^{K+1})^2}{(1-\gamma)^2} \left(\sum_{k=0}^{K-1} \alpha_k  \kappa_{\mu}  \| \epsilon_k \|^2_{\mu} + \alpha_K \right) \\
    & \leq \frac{\kappa_{\mu}}{(1 - \gamma)^2} \max_{0 \leq k \leq K-1} \| \epsilon_k \|_{\mu}^2 + \frac{\gamma^K}{(1 - \gamma)}.
\end{align*}
The inequalities $(a)$ and $(b)$ follow from Jensen's inequality, $(c)$ follows from $\|Q_0\|_{\infty}, \|Q^{\pi}\|_{\infty} \leq 1$, and $(d)$ follows from Assumption \ref{assumption:concentration_coefficient} that $\rho^{\pi} A_k = \rho^{\pi} (P^{\pi})^k \leq \kappa_{\mu} \mu$. Thus we have 
\begin{align*}
    \subopt(V_K; \pi) &= |V_K - V^{\pi}| \nonumber \\
    &= \bigg|\mathbb{E}_{\rho, \pi}[Q_K(s,a)] - \mathbb{E}_{\rho}[Q^{\pi}(s,a)] \bigg| \nonumber \\
    &\leq \mathbb{E}_{\rho, \pi} \left[ |Q_K(s,a) - Q^{\pi}(s,a)| \right] \nonumber \\
    &\leq \sqrt{   \mathbb{E}_{\rho, \pi} \left[ (Q_K(s,a) - Q^{\pi}(s,a))^2 \right] } \nonumber \\ 
    &= \|Q_K - Q^{\pi}\|_{\rho^{\pi}} \nonumber \\
    &\leq \frac{ \sqrt{\kappa_{\mu}} }{1-\gamma} \max_{0 \leq k \leq K-1} \| \epsilon_k \|_{\mu} + \frac{\gamma^{K/2}}{(1-\gamma)^{1/2}}.
\end{align*}
\paragraph{(ii) For OPL.} The sup-optimality for the OPL setting is  more complex than the OPE setting but the technical steps are relatively similar. In particular, let $\epsilon_{k-1} = T^* Q_{k-1} - Q_k, \forall k$ and $\pi^*(s) = \argmax_{a} Q^*(s,a), \forall s$, we have 
\begin{align}
    Q^* - Q_K &= T^{\pi^*} Q^* - T^{\pi^*} Q_{K-1} + \underbrace{T^{\pi^*} Q_{K-1} - T^* Q_{K-1}}_{\leq 0} + \epsilon_{K-1} \nonumber \\ 
    &\leq \gamma P^{\pi^*} (Q^* - Q_{K-1}) + \epsilon_{K-1} \nonumber \\ 
    &\leq \sum_{k=0}^{K-1} \gamma^{K-k-1} (P^{\pi^*})^{K-k-1} \epsilon_k + \gamma^{K} (P^{\pi^*})^K (Q^* - Q_0) \text{ (by recursion)}.
    \label{eq:upper_bound_Qstar_QK}
\end{align}
Now, let $\pi_k$ be the greedy policy w.r.t. $Q_k$, we have 
\begin{align}
    Q^* - Q_K &= \underbrace{T^{\pi^*} Q^*}_{\geq T^{\pi_{K-1}} Q^*} - T^{\pi_{K-1}} Q_{K-1} + \underbrace{T^{\pi_{K-1}} Q_{K-1} - T^* Q_{K-1}}_{\geq 0} + \epsilon_{K-1} \nonumber \\ 
    &\geq \gamma P^{\pi_{K-1}}(Q^* - Q_{K-1}) + \epsilon_{K-1} \nonumber \\ 
    &\geq \sum_{k=0}^{K-1} \gamma^{K - k -1} (P^{\pi_{K-1}} \ldots P^{\pi_{k+1}}) \epsilon_k + \gamma^K (P^{\pi_{K-1}} \ldots P^{\pi_0}) (Q^* - Q_0).  
    \label{eq:lower_bound_Qstar_QK}
\end{align}
Now, we turn to decompose $Q^* - Q^{\pi_K}$ as 
\begin{align*}
    Q^* - Q^{\pi_K} &= (T^{\pi^*} Q^* - T^{\pi^*} Q_K) + \underbrace{(T^{\pi^*} Q_K - T^{\pi_K} Q_K)}_{\leq 0} + (T^{\pi_K} Q_K - T^{\pi_K} Q^{\pi_K}) \\ 
    &\leq \gamma P^{\pi^*} (Q^* - Q_K) + \gamma P^{\pi_K}(Q_K - Q^* + Q^* - Q^{\pi_K}). 
\end{align*}
Thus, we have 
\begin{align*}
    (I - \gamma P^{\pi_K}) (Q^* - Q^{\pi_K}) \leq \gamma (P^{\pi^*} - P^{\pi_K}) (Q^* - Q_K) .
\end{align*}
Note that the operator $(I - \gamma P^{\pi_K})^{-1} = \sum_{i=0}^{\infty} (\gamma P^{\pi_K})^i$ is monotone, thus 
\begin{align}
    Q^* - Q^{\pi_K} \leq \gamma (I - \gamma P^{\pi_K})^{-1} P^{\pi^*} (Q^* - Q_K) - \gamma  (I - \gamma P^{\pi_K})^{-1} P^{\pi_K} (Q^* - Q_K).
    \label{eq:upper_bound_Qstar_QpiK}
\end{align}
Combining Equation (\ref{eq:upper_bound_Qstar_QpiK}) with Equations (\ref{eq:upper_bound_Qstar_QK}) and (\ref{eq:lower_bound_Qstar_QK}), we have 
\begin{align*}
    Q^* - Q^{\pi_K} &\leq (I - \gamma P^{\pi_K})^{-1} \left( \sum_{k=0}^{K-1} \gamma^{K-k} (P^{\pi^*})^{K-k} \epsilon_k + \gamma^{K+1} (P^{\pi^*})^{K+1} (Q^* - Q_0) \right) - \\ 
    &  (I - \gamma P^{\pi_K})^{-1} \left( \sum_{k=0}^{K-1} \gamma^{K - k} (P^{\pi_{K}} \ldots P^{\pi_{k+1}}) \epsilon_k + \gamma^{K+1} (P^{\pi_{K}} \ldots P^{\pi_0}) (Q^* - Q_0) \right).
\end{align*}
Using the triangle inequality, the above inequality becomes 
\begin{align*}
    Q^* - Q^{\pi_K} \leq \frac{2 \gamma (1 - \gamma^{K+1})}{(1 - \gamma)^2}  \left( \sum_{k=0}^{K-1} \alpha_k A_k |\epsilon_k| + \alpha_K A_K |Q^* - Q_0| \right),
\end{align*}
where 
\begin{align*}
    A_k &= \frac{1-\gamma}{2} (I - \gamma P^{\pi_K})^{-1} \left( (P^{\pi^*})^{K-k} + P^{\pi_{K}} \ldots P^{\pi_{k+1}} \right), \forall k < K, \\ 
    A_K &= \frac{1-\gamma}{2} (I - \gamma P^{\pi_K})^{-1} \left( (P^{\pi^*})^{K + 1} + P^{\pi_{K}} \ldots P^{\pi_0} \right), \\
    \alpha_k &= \gamma^{K-k-1} (1- \gamma) / (1-\gamma^{K+1}), \forall k < K, \\ 
    \alpha_K &= \gamma^K (1 - \gamma) / ( 1- \gamma^{K+1}). 
\end{align*}
Note that $A_k$ is a probability kernel for all $k$ and $\sum_k \alpha_k = 1$. Thus, similar to the steps in the OPE setting, for any policy $\pi$, we have
\begin{align*}
    \| Q^* - Q^{\pi_K} \|_{\rho^{\pi}}^2 &\leq  \left[ \frac{2 \gamma (1 - \gamma^{K+1})}{(1 - \gamma)^2} \right]^2  \left( \int \rho(ds) \pi(da|s) \sum_{k=0}^{K-1}\alpha_k  A_k \epsilon_k^2(s,a) + \alpha_K \right) \\ 
    &\leq \left[ \frac{2 \gamma (1 - \gamma^{K+1})}{(1 - \gamma)^2} \right]^2  \left( \int \mu(ds,da) \sum_{k=0}^{K-1} \alpha_k  \kappa_{\mu} \epsilon_k^2(s,a) + \alpha_K \right) \\ 
    &= \left[ \frac{2 \gamma (1 - \gamma^{K+1})}{(1 - \gamma)^2} \right]^2 \left(\sum_{k=0}^{K-1} \alpha_k  \kappa_{\mu}  \| \epsilon_k \|^2_{\mu} + \alpha_K \right) \\
    & \leq \frac{4 \gamma^2 \kappa_{\mu}}{(1 - \gamma)^4} \max_{0 \leq k \leq K-1} \| \epsilon_k \|_{\mu}^2 + \frac{ 4 \gamma^{K+2}}{(1 - \gamma)^3}.
\end{align*}
Thus, we have 
\begin{align*}
    \subopt(\pi^K) = \| Q^* - Q^{\pi_K} \|_{\rho^{\pi}} \leq \frac{2 \gamma \sqrt{\kappa_{\mu}}}{(1 - \gamma)^2} \max_{0 \leq k \leq K-1} \| \epsilon_k \|_{\mu} + \frac{ 2 \gamma^{K/2+1}}{(1 - \gamma)^{3/2}}.
\end{align*}
\end{proof}
\subsubsection*{Step 2: A Bellman error decomposition}
The next step of the proof is to decompose the Bellman errors $\| Q_{k+1} - T^{\pi} Q_k \|_{\mu}$ for OPE and $\| Q_{k+1} - T^* Q_k \|_{\mu}$ for OPL. Since these errors can be decomposed and bounded similarly, we only focus on OPL here. 

The difficulty in controlling the estimation error $\| Q_{k+1} - T^* Q_k \|_{2,\mu}$ is that $Q_k$ itself is a random variable that depends on the offline data $\mathcal{D}$. In particular, at any fixed $k$ with Bellman targets $\{y_i\}_{i=1}^n$ where $y_i = r_i + \gamma \max_{a'} Q_k(s_i', a')$, it is not immediate that 
$\mathbb{E}\left[ [T^* Q_k](x_i) - y_i | x_i \right] = 0$ for each covariate $x_i := (s_i, a_i)$ as $Q_k$ itself depends on $x_i$ (thus the tower law cannot apply here). A naive and simple approach to break such data dependency of $Q_k$ is to split the original data $\mathcal{D}$ into $K$ disjoint subsets and estimate each $Q_k$ using a separate subset. This naive approach is equivalent to the setting in \cite{DBLP:journals/corr/abs-1901-00137} where a fresh batch of data is generated for different iterations. This approach is however not efficient as it uses only $n/K$ samples to estimate each $Q_k$. This is problematic in high-dimensional offline RL when the number of iterations $K$ can be very large as it is often the case in practical settings. We instead prefer to use all $n$ samples to estimate each $Q_k$. This requires a different approach to handle the complicated data dependency of each $Q_k$. To circumvent this issue, we leverage a uniform convergence argument by introducing a deterministic covering of $T^* \fnn$. Each element of the deterministic covering induces a different regression target $\{r_i + \gamma \max_{a'} \tilde{Q}(s'_i, a')\}_{i=1}^n$ where $\tilde{Q}$ is a deterministic function from the covering which ensures that $\mathbb{E}\left[ r_i + \gamma \max_{a'} \tilde{Q}(s'_i, a') - [T^* \tilde{Q}](x_i) | x_i\right] = 0$. In particular, we denote 
\begin{align*}
    y_i^{Q_k} = r_i + \gamma  \max_{a'} Q_k(s'_i, a'), \forall i \text{ and } \hat{f}^{Q_k} := Q_{k+1} = \arginf_{f \in \fnn} \sum_{i=1}^n l(f(x_i), y_i^{Q_k}), \text{ and } f_*^{Q_k} = T^* Q_k, 
\end{align*}
where $l(x,y) = (x-y)^2$ is the squared loss function. Note that for any deterministic $Q \in \fnn$, we have $f_*^Q(x_1) = \mathbb{E}[y_1^Q|x_1], \forall x_1$, thus 
\begin{align}
    \mathbb{E}(l_f - l_{f_*^Q}) = \| f - f_*^Q \|_{\mu}^2, \forall f, 
    \label{eq:loss_to_norm}
\end{align}
where $l_f$ denotes the random variable $(f(x_1) - y_1^{Q})^2$ for a given fixed $Q$. Now letting $f_{\perp}^Q := \arginf_{f \in \fnn} \|f - f_*^Q \|_{2, \mu}$ be the projection of $f_*^Q$ onto the function class $\fnn$, we have 
\begin{align}
    \max_{k} \| Q_{k+1} - T^*Q_k \|_{\mu}^2 &= \max_{k} \| \hat{f}^{Q_{k}} - f_*^{Q_{k}} \|_{\mu}^2 \nonumber  \overset{(a)}{\leq} \sup_{Q \in \fnn} \| \hat{f}^{Q} - f_*^{Q} \|_{\mu}^2 \overset{(b)}{=} \sup_{Q \in \fnn} \mathbb{E}(l_{\hat{f}^Q} - l_{f_*^Q}) 
    \nonumber\\
    &\overset{(c)}{\leq} \sup_{Q \in \fnn} \left\{ \mathbb{E}(l_{\hat{f}^Q} - l_{f_*^Q}) + \mathbb{E}_n(l_{f_{\perp}^Q} - l_{\hat{f}^Q}) \right\} \nonumber \\
    &= \sup_{Q \in \fnn} \left\{ (\mathbb{E} - \mathbb{E}_n)(l_{\hat{f}^{Q}} - l_{f_*^{Q}}) + \mathbb{E}_n(l_{f_{\perp}^{Q}} - l_{f_*^{Q}}) \right \} \nonumber \\ 
    &\leq \underbrace{\sup_{Q \in \fnn} (\mathbb{E} - \mathbb{E}_n)(l_{\hat{f}^Q} - l_{f_*^Q})}_{I_1, \text{empirical process term}} + \underbrace{\sup_{Q \in \fnn} \mathbb{E}_n(l_{f_{\perp}^Q} - l_{f_*^Q})}_{I_2, \text{bias term}}, 
    \label{eq:decomposition}
\end{align}
where (a) follows from that $Q_{k} \in \fnn$, (b) follows from Equation (\ref{eq:loss_to_norm}), and (c) follows from that $\mathbb{E}_n [l_{\hat{f}^Q}] \leq \mathbb{E}_n [l_{f^Q}], \forall f, Q \in \fnn$. That is, the error is decomposed into two terms: the first term $I_1$ resembles the empirical process in statistical learning theory and the second term $I_2$ specifies the bias caused by the regression target $f_*^{Q}$ not being in the function space $\fnn$.


\subsubsection*{Step 3: A deviation analysis}
The next step is to bound the empirical process term and the bias term via an intricate concentration, local Rademacher complexities and a localization argument. First, the bias term in Equation (\ref{eq:decomposition}) is taken uniformly over the function space, thus standard concentration arguments such as Bernstein's inequality and Pollard's inequality used in \cite{DBLP:journals/jmlr/MunosS08,DBLP:conf/icml/0002VY19} do not apply here. Second, local Rademacher complexities \citep{bartlett2005} are data-dependent complexity measures that exploit the fact that only a small subset of the function class will be used. Leveraging a localization argument for local Rademacher complexities \citep{farrell2018deep}, we localize an empirical Rademacher ball into smaller balls by which we can handle their complexities more effectively. Moreover, we explicitly use the sub-root function argument to derive our bound and extend the technique to the uniform convergence case. That is, reasoning over the sub-root function argument makes our proof more modular and easier to incorporate the uniform convergence argument.

Localization is particularly useful to handle the complicated approximation errors induced by deep ReLU network function approximation.

\subsubsection*{Step 3.a: Bounding the bias term via a uniform convergence concentration inequality}
Before delving into our proof, we introduce relevant notations. Let $\mathcal{F} - \mathcal{G} := \{f - g: f \in \mathcal{F}, g \in \mathcal{G}\}$, let $N(\epsilon, \mathcal{F}, \| \cdot \|)$ be the $\epsilon$-covering number of $\mathcal{F}$ w.r.t. $\| \cdot \|$ norm, $H(\epsilon, \mathcal{F}, \| \cdot \|) := \log N(\epsilon, \mathcal{F}, \| \cdot \|)$ be the entropic number, 
let $N_{[]}(\epsilon, \mathcal{F}, \|\cdot\|)$ be the bracketing number of $\mathcal{F}$, i.e., the minimum number of brackets of $\|\cdot\|$-size less than or equal to $\epsilon$, necessary to cover $\mathcal{F}$, let $H_{[]}(\epsilon, \mathcal{F}, \|\cdot\|) = \log N_{[]}(\epsilon, \mathcal{F}, \|\cdot\|)$ be the $\|\cdot\|$-bracketing metric entropy of $\mathcal{F}$,let $\mathcal{F} | \{x_i\}_{i=1}^n = \{(f(x_1), ..., f(x_n)) \in \mathbb{R}^n | f \in \mathcal{F}\}$, and let $T^* \mathcal{F} = \{T^* f: f \in \mathcal{F}\}$. Finally, for sample set $\{x_i\}_{i=1}^n$, we define the empirical norm $\|f\|_n := \sqrt{\frac{1}{n}\sum_{i=1}^n f(x_i)^2}$.

We define the inherent Bellman error as  $d_{\fnn} := \sup_{Q \in \fnn} \inf_{f \in \fnn}  \| f - T^* Q\|_{\mu}$. This implies that 
\begin{align}
    d_{\fnn}^2 := \sup_{Q \in \fnn} \inf_{f \in \fnn}  \| f - T^* Q\|_{\mu}^2 = \sup_{Q \in \fnn} \mathbb{E}(l_{f_{\perp}^Q} - l_{f_*^Q}). 
\end{align}
We have 
\begin{align*}
    |l_f - l_g| \leq 4  |f-g| \text{ and } |l_f - l_g| \leq 8 . 
\end{align*}

We have 
\begin{align*}
    &H(\epsilon, \{l_{f_{\perp}^Q} - l_{f_*^Q}: Q \in \fnn\}| \{x_i, y_i\}_{i=1}^n, n^{-1} \| \cdot \|_1 )  \\
    &\leq H(\frac{\epsilon}{4}, \{f_{\perp}^Q - f_*^Q: Q \in \fnn\}| \{x_i\}_{i=1}^n, n^{-1} \| \cdot \|_1 ) \\ 
    &\leq H(\frac{\epsilon}{4 }, (\mathcal{F} - T^* \fnn)| \{x_i \}_{i=1}^n, n^{-1} \| \cdot \|_1 ) \\ 
    &\leq H(\frac{\epsilon}{8 }, \fnn| \{x_i\}_{i=1}^n, n^{-1} \| \cdot \|_1) + H(\frac{\epsilon}{8  }, T^* \fnn| \{x_i\}_{i=1}^n, n^{-1} \| \cdot \|_1 ) \\ 
    &\leq H(\frac{\epsilon}{8 }, \fnn| \{x_i\}_{i=1}^n,  \| \cdot \|_{\infty}) + H(\frac{\epsilon}{8 }, T^* \fnn,  \| \cdot \|_{\infty})
\end{align*}

For any $\epsilon' > 0$ and $\delta' \in (0,1)$, it follows from Lemma \ref{lemma:sup_concentration} with $\epsilon = 1/2$ and $\alpha = \epsilon'^2$, with probability at least $1 - \delta'$, for any $Q \in \fnn$, we have 
\begin{align}
    \mathbb{E}_n(l_{f_{\perp}^Q} - l_{f_*^Q}) \leq 3 \mathbb{E}(l_{f_{\perp}^Q} - l_{f_*^Q}) + \epsilon'^2 \leq 3 d_{\fnn}^{2} + \epsilon'^2, 
    \label{eq:bound_E_minus_En_lf_minus_lfstar}
\end{align}
given that 
\begin{align*}
    n \approx \frac{1}{\epsilon'^2}\left( \log(4/\delta') + \log \mathbb{E} N(\frac{\epsilon'^2}{40 }, (\fnn - T^* \fnn)| \{x_i\}_{i=1}^n, n^{-1} \| \cdot \|_1) \right) .
\end{align*}

Note that if we use Pollard's inequality \citep{DBLP:journals/jmlr/MunosS08} in the place of Lemma \ref{lemma:sup_concentration}, the RHS of Equation (\ref{eq:bound_E_minus_En_lf_minus_lfstar}) is bounded by $\epsilon'$ instead of $\epsilon'^2$(i.e., $n$ scales with $O(1/\epsilon'^4)$ instead of $O(1/\epsilon'^2)$). In addition, unlike \cite{DBLP:conf/icml/0002VY19}, the uniform convergence argument hinders the application of Bernstein's inequality. We remark that \citealt{DBLP:conf/icml/0002VY19} makes a mistake in their proof by ignoring the data-dependent structure in the algorithm (i.e., they wrongly assume that $Q^k$ in Algorithm \ref{alg:LSVI} is fixed and independent of $\{s_i, a_i\}_{i=1}^n$). Thus, the uniform convergence argument in our proof is necessary. 

\subsubsection*{Step 3.b: Bounding the empirical process term via local Rademacher complexities}
For any $Q \in \fnn$, we have 
\begin{align*}
    |l_{f_{\perp}^Q} - l_{f_*^Q}| &\leq 2 |f_{\perp}^Q - f_*^Q| \leq 2, \\ \mathbb{V}[l_{f_{\perp}^Q} - l_{f_*^Q}] &\leq \mathbb{E}[(l_{f_{\perp}^Q} - l_{f_*^Q})^2] \leq 4 \mathbb{E} (f_{\perp}^Q - f_*^Q)^2. 
 \end{align*}
Thus, it follows from Lemma \ref{lemma:local_rademacher_complexity_basics_first_part} (with $\alpha = 1/2$) that with any $r > 0, \delta \in (0,1)$, with probability at least $1 - \delta$, we have
\begin{align*}
    &\sup \{(\mathbb{E} - \mathbb{E}_n)(l_{\hat{f}^Q} - l_{f_*^Q}): Q \in \fnn, \|\hat{f}^Q - f_*^Q\|^2_{\mu} \leq r \} \\
    &\leq \sup \{(\mathbb{E} - \mathbb{E}_n)(l_f - l_g): f \in \fnn, g \in T^{*} \mathcal{F}, \|f - g\|^2_{\mu} \leq r \} \\
     &\leq 3 \mathbb{E} R_n \left\{l_f - l_g: f \in \fnn, g \in T^{*} \fnn, \|f - g \|_{\mu}^2 \leq r \right \} + 2 \sqrt{ \frac{2 r \log(1/\delta) }{n} } + \frac{28 \log(1/\delta)}{3n} \nonumber \\ 
     &\leq 6 \mathbb{E} R_n \left\{f - g: f \in \fnn, g \in T^{*} \fnn, \|f - g \|_{\mu}^2 \leq r \right \} + 2 \sqrt{ \frac{2 r \log(1/\delta) }{n} } + \frac{28 \log(1/\delta)}{3n}. 
\end{align*}

\subsubsection*{Step 3.c: Bounding $\|Q_{k+1} - T^* Q_k\|_{\mu}$ using localization argument via sub-root functions}
We bound $\|Q_{k+1} - T^* Q_k \|_{\mu}$ using the localization argument, breaking down the Rademacher complexities into local balls and then build up the original function space from the local balls. Let $\psi$ be a sub-root function  \citep[Definition~3.1]{bartlett2005} with the fixed point $r_*$ and assume that for any $ r \geq r_*$, we have 
\begin{align}
    \psi(r) \geq  3 \mathbb{E} R_n \left\{f - g: f \in \fnn, g \in T^{*} \fnn, \|f - g \|_{\mu}^2 \leq r \right \}.
    \label{eq:sub_root_function_bounding_empirical_Rademacher}
\end{align}

We recall that a function $\psi: [0, \infty) \rightarrow [0, \infty)$ is \textit{sub-root} if it is non-negative, non-decreasing and $r \mapsto \psi(r) / \sqrt{r}$ is non-increasing for $r > 0$. Consequently, a sub-root function $\psi$ has a unique fixed point $r_*$ where $r_* = \psi(r_*)$. In addition, $\psi(r) \leq \sqrt{r r_*}, \forall r \geq r_*$. In the next step, we will find a sub-root function $\psi$ that satisfies the inequality above, but for this step we just assume that we have such $\psi$ at hand. Combining Equations (\ref{eq:decomposition}), (\ref{eq:bound_E_minus_En_lf_minus_lfstar}), and (\ref{eq:sub_root_function_bounding_empirical_Rademacher}), we have: for any $r \geq r_*$ and any $\delta \in (0,1)$, if $\| \hat{f}^{Q_{k-1}} - f_*^{Q_{k-1}} \|_{2,\mu}^2 \leq r$, with probability at least $1 - \delta$, 
\begin{align*}
    \| \hat{f}^{Q_{k-1}} - f_*^{Q_{k-1}} \|_{2,\mu}^2 &\leq 2\psi(r) +  2 \sqrt{ \frac{2 r \log(2/\delta) }{n} } + \frac{28 \log(2/\delta)}{3n} + 3 d^2_{\mathcal{F}} + \epsilon'^2 \\ 
    &\leq \sqrt{r r_*} + 2 \sqrt{ \frac{2 r \log(2/\delta)}{n} } + \frac{28 \log(2/\delta)}{3n} + (\sqrt{3} d_{\mathcal{F}} + \epsilon' )^2, 
\end{align*} 
where 
\begin{align*}
    n \approx \frac{1}{4\epsilon'^2}\left( \log(8/\delta) + \log \mathbb{E} N(\frac{\epsilon'^2}{20}, (\fnn - T^* \fnn)| \{x_i\}_{i=1}^n, n^{-1}\| \cdot \|_1) \right).
\end{align*}

Consider $r_0 \geq r_*$ (to be chosen later) and denote the events 
\begin{align*}
    B_k := \{ \| \hat{f}^{Q_{k-1}} - f_*^{Q_{k-1}} \|^2_{2,\mu} \leq 2^k r_0   \}, \forall k \in \{0,1,...,l\},
\end{align*}
where $l = \log_2(\frac{1}{r_0}) \leq \log_2( \frac{1}{r_*} )$. We have $B_0 \subseteq B_1 \subseteq ... \subseteq B_l$ and since $\|f - g\|_{\mu}^2 \leq 1, \forall |f|_{\infty}, |g|_{\infty} \leq 1$, we have $P(B_l) = 1$. 
If $\|\hat{f}^{Q_{k-1}} - f_*^{Q_{k-1}} \|^2_{\mu} \leq 2^i r_0$ for some $i \leq l$, then with probability at least $1 - \delta$, we have
\begin{align*}
    \| \hat{f}^{Q_{k-1}} - f_*^{Q_{k-1}} \|_{2,\mu}^2 
    &\leq \sqrt{2^i r_0 r_*} + 2 \sqrt{ \frac{2^{i+1} r_0 \log(2/\delta)}{n} } + \frac{28 \log(2/\delta)}{3n} + (\sqrt{3} d_{\fnn} + \epsilon' )^2 \\
    &\leq 2^{i-1} r_0, 
\end{align*} 
if the following inequalities hold 
\begin{align*}
    \sqrt{ 2^i r_*} + 2\sqrt{ \frac{2^{i+1} \log(2/\delta)}{n} } &\leq \frac{1}{2} 2^{i-1} \sqrt{r_0},  \\ 
    \frac{28 \log(2/\delta)}{3n} + (\sqrt{3} d_{\fnn} + \epsilon' )^2 &\leq \frac{1}{2} 2^{i-1} r_0. 
\end{align*}

We choose $r_0 \geq r_*$ such that the inequalities above hold for all $0 \leq i \leq l$. This can be done by simply setting
\begin{align*}
    \sqrt{r_0} &= \frac{2}{2^{i-1}}  \left( \sqrt{ 2^i r_*} + 2\sqrt{ \frac{2^{i+1} \log(2/\delta) }{n} } \right) \bigg |_{i=0} + \sqrt{  \frac{2}{2^{i-1}} \left(  \frac{28 \log(2/\delta)}{3n} + (\sqrt{3} d_{\fnn} + \epsilon' )^2  \right)   } \bigg |_{i=0} \\
    &\lesssim  d_{\fnn} + \epsilon' + \sqrt{\frac{\log(2/\delta)}{n}} + \sqrt{r_*}.
\end{align*}

Since $\{B_i\}$ is a sequence of increasing events, we have 
\begin{align*}
    P(B_0) &= P(B_1) - P(B_1 \cap B_0^c ) = P(B_2) - P(B_2 \cap B_1^c) - P(B_1 \cap B_0^c) \\ 
    &=P(B_l) - \sum_{i=0}^{l-1} P(B_{i+1} \cap B_i^c) \geq 1 - l \delta.
\end{align*}
Thus, with probability at least $1 - \delta$, we have
\begin{equation}
\| \hat{f}^{Q_{k-1}} - f_*^{Q_{k-1}} \|_{\mu} \lesssim  d_{\fnn} + \epsilon' + \sqrt{\frac{\log(2l/\delta)}{n}} + \sqrt{r_*}
\label{eq:general_bounding}
\end{equation}
where 
\begin{align*}
    n \approx \frac{1}{4\epsilon'^2}\left( \log(8l/\delta) + \log \mathbb{E} N(\frac{\epsilon'^2}{20}, (\fnn - T^* \fnn)| \{x_i\}_{i=1}^n, n^{-1}\| \cdot \|_1)) \right) .
\end{align*}

\subsubsection*{Step 3.d: Finding a sub-root function and its fixed point}
It remains to find a sub-root function $\psi(r)$ that satisfies Equation (\ref{eq:sub_root_function_bounding_empirical_Rademacher}) and thus its fixed point. The main idea is to bound the RHS, the local Rademacher complexity, of Equation (\ref{eq:sub_root_function_bounding_empirical_Rademacher}) by its empirical counterpart as the latter can then be further bounded by a sub-root function represented by a measure of compactness of the function spaces $\fnn$ and $T^{*}\fnn$. 



For any $\epsilon > 0$, we have the following inequalities for entropic numbers:
\begin{align}
H(\epsilon, \fnn - T^* \fnn, \| \cdot \|_{n}) &\leq H(\epsilon/2, \fnn, \| \cdot \|_{n}) + H(\epsilon/2,  T^* \fnn, \| \cdot \|_{n}), \nonumber \\
H(\epsilon, \fnn, \|\cdot\|_{n}) &\leq 
H(\epsilon, \fnn| \{x_i\}_{i=1}^n, \| \cdot \|_{\infty}) \overset{(a)}{\lesssim} N[ (\log N)^2 + \log(1/\epsilon)],  \\
H(\epsilon, T^* \fnn, \|\cdot \|_{n}) &\leq H(\epsilon, T^* \fnn, \| \cdot \|_{\infty}) \leq H_{[]}(2 \epsilon, T^*\fnn, \| \cdot \|_{\infty}) \nonumber \\
&\overset{(b)}{\leq} H_{[]}(2\epsilon, \bar{B}^{\alpha}_{p,q}(\mathcal{X}), \| \cdot \|_{\infty}) \overset{(c)}{\lesssim} (2\epsilon)^{-d/\alpha},
\end{align}
where $N$ is a hyperparameter of the deep ReLU network described in Lemma \ref{lemma:approximation_power_for_Besov}, (a) follows from Lemma \ref{lemma:approximation_power_for_Besov}, and (b) follows from  Assumption \ref{assumption:completeness}, and (c) follows from Lemma \ref{lemma:entropic_number_of_Besov}. Let $\mathcal{H} := \fnn - T^{*} \fnn$, it follows from Lemma \ref{lemma:refined_entropy_integral} with $\{\xi_k := \epsilon / 2^k\}_{k \in \mathbb{N}}$ for any $\epsilon > 0$ that 
\begin{align*}
    &\mathbb{E}_{\sigma} R_n \{ h \in \mathcal{H} - \mathcal{H}: \|h\|_{n} \leq \epsilon \} \leq 4 \sum_{k=1}^{\infty} \frac{\epsilon}{2^{k-1}} \sqrt{ \frac{H(\epsilon/2^{k-1}, \mathcal{H}, \| \cdot \|_{n})}{n}  } \nonumber \\ 
    &\leq  4 \sum_{k=1}^{\infty} \frac{\epsilon}{2^{k-1}} \sqrt{ \frac{H(\epsilon/2^{k}, \fnn, \| \cdot \|_{\infty})}{n}  } + 4 \sum_{k=1}^{\infty} \frac{\epsilon}{2^{k-1}} \sqrt{ \frac{H(\epsilon/2^{k}, T^{\pi} \fnn, \| \cdot \|_{\infty})}{n}  }  \\ 
    &\leq \frac{4 \epsilon}{\sqrt{n}} \sum_{k=1}^{\infty} 2^{-(k-1)} \sqrt{N \left( (\log N)^2 + \log(2^k/\epsilon) \right)} + \frac{4 \epsilon}{\sqrt{n}} \sum_{k=1}^{\infty} 2^{-(k-1)} \sqrt{\left(\frac{\epsilon}{2^{k-1}} \right)^{-d/\alpha}} \\ 
    &\lesssim \frac{\epsilon}{\sqrt{n}} \sqrt{N ((\log N)^2 + \log(1/\epsilon))} + \frac{\epsilon^{1 - \frac{d}{2 \alpha}}}{\sqrt{n}},
\end{align*}
where we use $\sqrt{a + b} \leq \sqrt{a} + \sqrt{b}, \forall a,b \geq 0$, $\sum_{k=1}^{\infty} \frac{\sqrt{k}}{2^{k-1}} < \infty$, and $\sum_{k=1}^{\infty} \left( \frac{1}{2^{1 - \frac{d}{2 \alpha}}} \right)^{k-1} < \infty$.

It now follows from Lemma
\ref{lemma:local_empirical_rademacher_bounded_by_covering_number_with_empirical_norm} that 
\begin{align*}
&\mathbb{E}_{\sigma} R_n \{f \in \mathcal{F}, g \in T^{*} \mathcal{F}: \| f -g \|_{n}^2 \leq r \} \\
&\leq \inf_{\epsilon > 0} \bigg[ \mathbb{E}_{\sigma} R_n \{ h \in \mathcal{H} - \mathcal{H}: \| h \|_{\mu} \leq \epsilon \}  + \sqrt{ \frac{2r H(\epsilon/2, \mathcal{H}, \|\cdot \|_{n}) }{n}  } \bigg] \\ 
&\lesssim  \bigg[ \frac{\epsilon}{\sqrt{n}} \sqrt{N ((\log N)^2 + \log(1/\epsilon))} + \frac{\epsilon^{1 - \frac{d}{2 \alpha}}}{\sqrt{n}} + 
\sqrt{\frac{2r}{n}} \sqrt{N((\log N)^2 + \log(4 /\epsilon))} + \sqrt{\frac{2r}{n}}(\epsilon/2)^{\frac{-d}{2 \alpha}}
\bigg] \bigg |_{\epsilon = n^{-\beta}}\\ 
&\asymp n^{-\beta -1/2} \sqrt{N (\log^2 N + \log n )} + n^{-\beta(1 - \frac{d}{2 \alpha}) - 1/2} + \sqrt{\frac{r}{n}}  \sqrt{N (\log^2 N + \log n )} + \sqrt{r} n^{-\frac{1}{2}(1 - \frac{\beta d}{\alpha})} =: \psi_1(r),
\end{align*}
where $\beta \in (0, \frac{\alpha}{d})$ is an absolute constant to be chosen later.

Note that $\mathbb{V}[(f-g)^2] \leq \mathbb{E}[(f-g)^4] \leq \mathbb{E}[(f-g)^2]$ for any $f \in \fnn, g \in T^* \fnn$. Thus, for any $r \geq r_*$, it follows from Lemma \ref{lemma:local_rademacher_complexity_basics} that with probability at least $1 - \frac{1}{n}$, we have the following inequality for any $f \in \fnn, g \in T^* \fnn$ such that $\|f - g\|^2_{\mu} \leq r$, 
\begin{align*}
&\|f - g\|_{n}^2 \\
&\leq \|f - g \|^2_{\mu} + 3 \mathbb{E} R_n \{(f-g)^2: f \in \fnn, g \in T^* \fnn, \| f - g \|^2_{\mu} \leq r\} +  \sqrt{ \frac{2r \log n}{n} } + \frac{56}{3} \frac{\log n}{n} \\ 
&\leq  \|f - g \|^2_{\mu} + 3 \mathbb{E} R_n \{f-g: f \in \fnn, g \in T^* \fnn, \| f - g \|^2_{\mu} \leq r\} + \sqrt{ \frac{2r \log n }{n} } + \frac{56}{3} \frac{\log n}{n} \\ 
&\leq r + \psi(r) + r + r \leq 4r,
\end{align*}
if $r \geq r_* \lor \frac{2 log n}{n} \lor \frac{56 log n}{3 n}$. For such $r$, denote $E_r = \{ \| f - g \|_{n}^2 \leq 4r \} \cap \{\|f - f_*\|_{\mu}^2 \leq r \}$, we have $P(E_r) \geq 1 - 1/n$ and
\begin{align*}
&3 \mathbb{E} R_n \{f - g: f \in \fnn, g \in T^* \fnn, \|f - g\|^2_{\mu} \leq r \} \\
&=
3 \mathbb{E} \mathbb{E}_{\sigma} R_n \{f - g: f \in \fnn, g \in T^{*} \fnn, \|f - g\|^2_{\mu} \leq r \} \\
&\leq 3 \mathbb{E} \bigg[ 1_{E_r} \mathbb{E}_{\sigma} R_n \{f - g: f \in \fnn, g \in T^* \fnn, \|f - g\|^2_{\mu} \leq r \} +  (1 - 1_{E_r})  \bigg] \\ 
&\leq 3 \mathbb{E} \bigg[ \mathbb{E}_{\sigma} R_n \{f - g: f \in \fnn, g \in T^* \fnn, \|f - g\|^2_{n} \leq 4r \} +  (1 - 1_{E_r})  \bigg] \\
&\leq 3( \psi_1(4r) + \frac{1}{n}) \\ 
&\lesssim n^{-\beta -1/2} \sqrt{N (\log^2 N + \log n )} + n^{-\beta(1 - \frac{d}{2 \alpha}) - 1/2} + \sqrt{\frac{r}{n}}  \sqrt{N (\log^2 N + \log n )} \\
&+ \sqrt{r} n^{-\frac{1}{2}(1 - \frac{\beta d}{\alpha})} + n^{-1} =: \psi(r).
\end{align*}
It is easy to verify that $\psi(r)$ defined above is a sub-root function. The fixed point $r_*$ of $\psi(r)$ can be solved analytically via the simple quadratic equation $r_* = \psi(r_*)$. In particular, we have 
\begin{align}
    \sqrt{r_*} &\lesssim n^{-1/2} \sqrt{N (\log^2 N + \log n )} + n^{-\frac{1}{2}(1 - \frac{\beta d}{\alpha})} + n^{-\frac{\beta}{2} - \frac{1}{4}} [N (\log^2 N + \log n )]^{1/4} \nonumber \\
    &+ n^{-\frac{\beta}{2}(1 - \frac{d}{2 \alpha}) - \frac{1}{2}} + n^{-1/2} \nonumber \\ 
    &\lesssim n^{-\frac{1}{4} ( (2\beta) \land 1) + 1)} \sqrt{N (\log^2 N + \log n )} + n^{-\frac{1}{2}(1 - \frac{\beta d}{\alpha})} + n^{-\frac{\beta}{2}(1 - \frac{d}{2 \alpha}) - \frac{1}{2}} + n^{-1/2}.
    \label{eq:sandwich_fixed_point}
\end{align}
It follows from Equation (\ref{eq:general_bounding}) (where $l \lesssim \log(1/r_*)$), the definition of $d_{\fnn}$, Lemma \ref{lemma:approximation_power_for_Besov}, and Equation (\ref{eq:sandwich_fixed_point}) that for any $\epsilon' >0$ and $\delta \in (0,1)$, with probability at least $1 - \delta$, we have 
\begin{align}
    \max_{k} \|Q_{k+1} - T^* Q_k \|_{\mu} &\lesssim N^{-\alpha / d} + \epsilon' + n^{-\frac{1}{4} ( (2\beta) \land 1) + 1)} \sqrt{N (\log^2 N + \log n )} + n^{-\frac{1}{2}(1 - \frac{\beta d}{\alpha})} \nonumber \\
    &+ n^{-\frac{\beta}{2}(1 - \frac{d}{2 \alpha}) - \frac{1}{2}}
    + n^{-1/2}\sqrt{\log(1/\delta) + \log \log n  },
    \label{eq:final_parametric_upper_bound}
\end{align}
where 
\begin{align}
    n &\gtrsim \frac{1}{4\epsilon'^2}\bigg( \log(1/\delta) + \log \log n + \log \mathbb{E} N(\frac{\epsilon'^2}{20}, (\fnn - T^* \fnn)| \{x_i\}_{i=1}^n, n^{-1} \cdot \| \cdot \|_1)) \bigg) .
    \label{eq:sample_complexity_n}
\end{align}


\subsubsection*{Step 4: Minimizing the upper bound}
The final step for the proof is to minimize the upper error bound obtained in the previous steps w.r.t. two free parameters $\beta \in (0, \frac{\alpha}{d})$ and $N \in \mathbb{N}$. Note that $N$ parameterizes the deep ReLU architecture $\Phi(L,m,S,B)$ given Lemma \ref{lemma:approximation_power_for_Besov}. 
In particular, we optimize over $\beta \in (0, \frac{\alpha}{d})$ and $N \in \mathbb{N}$ to minimize the upper bound in the RHS of Equation (\ref{eq:final_parametric_upper_bound}). The RHS of Equation (\ref{eq:final_parametric_upper_bound}) is minimized (up to $\log n$-factor) by choosing
\begin{align}
    N \asymp n^{\frac{1}{2}((2 \beta \land 1) + 1) \frac{d}{2\alpha + d}} \text{ and }
    \beta = \left(2 + \frac{d^2}{\alpha (\alpha + d)} \right)^{-1}, 
    \label{eq:optimal_value_N_beta}
\end{align}
which results in $N \asymp n^{\frac{1}{2}(2 \beta  + 1) \frac{d}{2\alpha + d}}$. At these optimal values, Equation (\ref{eq:final_parametric_upper_bound}) becomes 
\begin{align}
    \max_{k} \|Q_{k+1} - T^* Q_k\|_{\mu} &\lesssim \epsilon' + n^{-\frac{1}{2}\left( \frac{2 \alpha}{2 \alpha + d} + \frac{d}{ \alpha} \right)^{-1}} \log n + n^{-1/2}\sqrt{\log(1/\delta) + \log \log n  },
    \label{eq:final_upper_bound}
\end{align}
where we use inequalities $n^{-\frac{\beta}{2}(1 - \frac{d}{2 \alpha}) - \frac{1}{2}} \leq n^{-\frac{1}{2}(1 - \frac{\beta d}{\alpha})} \asymp N^{-\alpha /d} = n^{-\frac{1}{2}\left( \frac{2 \alpha}{2 \alpha + d} + \frac{d}{ \alpha} \right)^{-1}} $.

Now, for any $\epsilon > 0$, we set $\epsilon' = \epsilon/3$ and let 
\begin{align*}
    n^{-\frac{1}{2}\left( \frac{2 \alpha}{2 \alpha + d} + \frac{d}{ \alpha} \right)^{-1}} \log n \lesssim \epsilon / 3 \text{ and } n^{-1/2}\sqrt{\log(1/\delta) + \log \log n  } \lesssim \epsilon / 3.
\end{align*}
It then follows from Equation (\ref{eq:final_upper_bound}) that with probability at least $1 - \delta$, we have $\max_{k} \| Q_{k+1} - T^* Q_k \|_{\mu} \leq \epsilon$ if $n$ simultaneously satisfies Equation (\ref{eq:sample_complexity_n}) with $\epsilon' = \epsilon/3$ and 
\begin{align}
    n \gtrsim \left(\frac{1}{\epsilon^2} \right)^{ \frac{2 \alpha}{ 2 \alpha + d} + \frac{d}{\alpha}} (\log^2 n)^{ \frac{2 \alpha}{ 2 \alpha + d}+ \frac{d}{\alpha}} \text{ and } n \gtrsim \frac{1}{\epsilon^2} \left( \log(1/\delta) + \log \log n  \right).
    \label{eq:explicit_n_1}
\end{align}

Next, we derive an explicit formula of the sample complexity satisfying Equation (\ref{eq:sample_complexity_n}). Using Equations (\ref{eq:final_parametric_upper_bound}), (\ref{eq:explicit_n_1}), and (\ref{eq:optimal_value_N_beta}), we have that $n$ satisfies Equation (\ref{eq:sample_complexity_n}) if 
\begin{align}
\begin{cases}
        n &\gtrsim \frac{1}{\epsilon^2} \left[ n^{\frac{2\beta + 1}{2} \frac{d}{2 \alpha + d}} (\log^2 n + \log(1/\epsilon))\right], \\ 
    n &\gtrsim \left( \frac{1}{\epsilon^2} \right)^{1 + \frac{d}{\alpha}}, \\
    n &\gtrsim \frac{1}{\epsilon^2} \left( \log(1/\delta) + \log \log n  \right).
\end{cases}
\label{eq:explicit_n_2}
\end{align}

Note that $\beta \leq 1/2$ and $\frac{d}{\alpha} \leq 2$; thus, we have 
\begin{align*}
    \left(1 - \frac{2\beta + 1}{2} \frac{d}{2 \alpha + d} \right)^{-1} \leq 1 + \frac{d}{\alpha} \leq 3.
\end{align*}
Hence, $n$ satisfies Equations (\ref{eq:explicit_n_1}) and (\ref{eq:explicit_n_2}) if 
\begin{align*}
    n \gtrsim \left(\frac{1}{\epsilon^2} \right)^{1 + \frac{d}{\alpha}} \log^6 n + \frac{1}{\epsilon^2}(\log(1/\delta) + \log \log n).
\end{align*}

\renewcommand{\thesection}{B}
\section{Technical Lemmas}
\label{appendix:B}

\begin{lem}[\citet{bartlett2005}]
Let $r > 0$ and let 
\begin{align*}
    \mathcal{F} \subseteq \{f: \mathcal{X} \rightarrow [a,b] : \mathbb{V}[f(X_1)] \leq r\}. 
\end{align*}
\begin{enumerate}
    \item 
    For any $\lambda > 0$, we have with probability at least $1 - e^{-\lambda}$, 
    \begin{align*}
        \sup_{f \in \mathcal{F}} \left(\mathbb{E}f - \mathbb{E}_n f \right) 
        \leq \inf_{\alpha >0 } \left( 2(1+\alpha) \mathbb{E} \left[R_n \mathcal{F} \right] + \sqrt{ \frac{2 r \lambda}{ n}} + (b-a)\left(\frac{1}{3} + \frac{1}{\alpha} \right)\frac{\lambda}{n} \right).
    \end{align*}
    \label{lemma:local_rademacher_complexity_basics_first_part}

    \item 
    With probability at least $1 - 2 e^{-\lambda}$, 
    \begin{align*}
        \sup_{f \in \mathcal{F}} \left(\mathbb{E}f - \mathbb{E}_n f \right) 
        \leq \inf_{\alpha \in (0,1) } \left( \frac{2(1+\alpha)}{(1 - \alpha)} \mathbb{E}_{\sigma} \left[R_n \mathcal{F} \right] + \sqrt{ \frac{2 r \lambda}{ n}} + (b-a)\left(\frac{1}{3} + \frac{1}{\alpha} + \frac{1 + \alpha}{2 \alpha (1 - \alpha)} \right)\frac{\lambda}{n} \right).
    \end{align*}
\end{enumerate}
Moreover, the same results hold for $\sup_{f \in \mathcal{F}} \left(  \mathbb{E}_n f - \mathbb{E}f\right) $. 

\label{lemma:local_rademacher_complexity_basics}
\end{lem}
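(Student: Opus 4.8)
The statement is the classical variance-sensitive (``data-dependent'') symmetrization bound that follows from Talagrand's concentration inequality, and the plan is to assemble it in three moves. Throughout, write $Z := \sup_{f\in\mathcal F}(\mathbb E f - \mathbb E_n f)$. First I would reduce to centered functions: replacing $\mathcal F$ by $\{f-\mathbb E f : f\in\mathcal F\}$ leaves $Z$, the range width $b-a$, and the variance bound $r$ unchanged, so we may assume $\mathbb E f=0$ for all $f\in\mathcal F$.

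The first move is to apply Bousquet's one-sided version of Talagrand's inequality to the centered empirical process $nZ=\sup_f\sum_{i=1}^n(-f(X_i))$: since the summands are bounded by $b-a$ and have variance at most $r$, for every $\lambda>0$, with probability at least $1-e^{-\lambda}$,
\[
Z \;\le\; \mathbb E Z + \sqrt{\tfrac{2\lambda}{n}\bigl(r + 2(b-a)\,\mathbb E Z\bigr)} + \tfrac{(b-a)\lambda}{3n}.
\]
The second move is to pass to the Rademacher complexity: symmetrization gives $\mathbb E Z \le 2\,\mathbb E[R_n\mathcal F]$; then I would use $\sqrt{x+y}\le\sqrt x+\sqrt y$ to split off the pure-variance term $\sqrt{2r\lambda/n}$, and decouple the remaining cross term $\sqrt{(2\lambda/n)\cdot 4(b-a)\,\mathbb E[R_n\mathcal F]}$ by Young's inequality $\sqrt{uv}\le\tfrac\alpha2 u+\tfrac1{2\alpha}v$. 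Collecting the pieces and retaining the $\tfrac13$ from Bousquet's term yields, for each $\alpha>0$, with probability at least $1-e^{-\lambda}$,
\[
Z \;\le\; 2(1+\alpha)\,\mathbb E[R_n\mathcal F] + \sqrt{\tfrac{2r\lambda}{n}} + (b-a)\Bigl(\tfrac13+\tfrac1\alpha\Bigr)\tfrac\lambda n,
\]
and the infimum over $\alpha>0$ is Part 1.

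For Part 2, the third move replaces $\mathbb E[R_n\mathcal F]$ by its conditional (empirical) version $\mathbb E_\sigma[R_n\mathcal F]$. The map $(X_1,\dots,X_n)\mapsto\mathbb E_\sigma[R_n\mathcal F]$ has bounded differences $(b-a)/n$, so a second, variance-sensitive Talagrand application gives, with probability at least $1-e^{-\lambda}$, an inequality of the shape $\mathbb E[R_n\mathcal F]\le\tfrac1{1-\alpha}\,\mathbb E_\sigma[R_n\mathcal F]+c(\alpha)\tfrac{(b-a)\lambda}{n}$ with leading factor $1/(1-\alpha)$. Substituting into the Part 1 bound, union-bounding the two exceptional events (total failure probability $\le 2e^{-\lambda}$), and combining the three residual $O((b-a)\lambda/n)$ terms produces the stated leading factor $\tfrac{2(1+\alpha)}{1-\alpha}$ and residual coefficient $\tfrac13+\tfrac1\alpha+\tfrac{1+\alpha}{2\alpha(1-\alpha)}$. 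Finally, the ``moreover'' claim follows by running the whole argument on $-\mathcal F=\{-f:f\in\mathcal F\}$, which has the same range width, the same variance bound, and --- by symmetry of the Rademacher signs --- the same $R_n$.

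\textbf{The hard part.} None of this is conceptually difficult once Talagrand's inequality is available; the real work is the constant bookkeeping --- arranging the two Young-type splits and the two Talagrand applications so that the residual coefficients come out exactly as $\tfrac13+\tfrac1\alpha$ (Part 1) and $\tfrac13+\tfrac1\alpha+\tfrac{1+\alpha}{2\alpha(1-\alpha)}$ (Part 2), and being careful to center the class before invoking Bousquet's bound so that its variance proxy is the wimpy variance $r$ and not a second moment.
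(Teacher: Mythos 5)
The paper does not actually prove this lemma --- it is quoted verbatim as Theorem~2.1 of \citet{bartlett2005} --- and your proposal is a faithful reconstruction of that reference's argument: Bousquet's one-sided Talagrand inequality, symmetrization $\mathbb{E}Z \le 2\,\mathbb{E}[R_n\mathcal{F}]$, and the $\sqrt{uv}\le\frac{\alpha}{2}u+\frac{1}{2\alpha}v$ decoupling for Part~1, followed by replacing $\mathbb{E}[R_n\mathcal{F}]$ with $\mathbb{E}_\sigma[R_n\mathcal{F}]$ via a second concentration step and a union bound for Part~2. One minor imprecision: bounded differences of $(X_1,\dots,X_n)\mapsto\mathbb{E}_\sigma[R_n\mathcal{F}]$ alone would only yield a $\sqrt{\lambda/n}$ deviation, so to get the $O((b-a)\lambda/n)$ residual with leading factor $1/(1-\alpha)$ you must invoke the self-bounding (lower-tail) concentration of conditional Rademacher averages, which your phrase ``variance-sensitive Talagrand application'' correctly points to but does not follow from bounded differences per se.
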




\begin{lem}[{\citet[Theorem~11.6]{DBLP:books/daglib/0035701}}]
Let $B \geq 1$ and $\mathcal{F}$ be a set of functions $f: \mathbb{R}^d \rightarrow [0,B]$. Let $Z_1, ..., Z_n$ be i.i.d. $\mathbb{R}^d$-valued random variables. For any $\alpha > 0$, $0 < \epsilon < 1$, and $ n \geq 1$, we have 
\begin{align*}
    P \left\{ \sup_{f \in \mathcal{F}} \frac{\frac{1}{n}\sum_{i=1}^n f(Z_i) - \mathbb{E}[f(Z)]}{\alpha + \frac{1}{n} \sum_{i=1}^n f(Z_i) + \mathbb{E}[f(Z)]} > \epsilon \right\} \leq 4 \mathbb{E} N(\frac{\alpha \epsilon}{5}, \mathcal{F}|Z_1^n, n^{-1} \| \cdot \|_1) \exp \left( \frac{-3 \epsilon^2 \alpha n}{40 B} \right).
\end{align*}
\label{lemma:sup_concentration}
\end{lem}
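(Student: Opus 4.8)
The plan is to prove this ratio-type (relative) uniform deviation inequality by the classical four-stage route: two symmetrization steps, a covering/union bound, and a variance-adaptive single-function tail bound. The engine throughout is that the boundedness $0 \le f \le B$ couples the variance of each $f$ to its mean, hence to the denominator, which is what upgrades an additive bound to the stated relative one.

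First I would perform ghost-sample symmetrization. Introduce an independent copy $Z_1', \dots, Z_n'$ of the sample with empirical mean $\mathbb{E}_n' f := \frac1n \sum_i f(Z_i')$, and replace the population mean $\mathbb{E} f$ in the event by $\mathbb{E}_n' f$. Writing $f^*$ for the (approximate) maximizer of the ratio, which depends only on $Z_1^n$, I would condition on $Z_1^n$ and show that $\mathbb{E}_n' f^*$ lies close to $\mathbb{E} f^*$ with probability at least $1/2$. This uses Chebyshev together with the crucial bound $\mathrm{Var}(f) \le \mathbb{E}[f^2] \le B\,\mathbb{E}[f]$, so that the fluctuation of the ghost mean is itself controlled by the denominator $\alpha + \mathbb{E}_n f + \mathbb{E} f$. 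The outcome is an inequality of the form
\[
P\!\left\{\sup_f \frac{\mathbb{E}_n f - \mathbb{E} f}{\alpha + \mathbb{E}_n f + \mathbb{E} f} > \epsilon\right\} \le 2\, P\!\left\{\sup_f \frac{\mathbb{E}_n f - \mathbb{E}_n' f}{\alpha + \mathbb{E}_n f + \mathbb{E}_n' f} > \tfrac{\epsilon}{2}\right\},
\]
which is now a purely empirical (symmetric) quantity.

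Next I would apply Rademacher symmetrization: since $Z_i$ and $Z_i'$ are i.i.d., attaching signs $\sigma_i \in \{\pm1\}$ to the differences $f(Z_i) - f(Z_i')$ leaves the joint law unchanged while the denominator is symmetric in the two samples, so the probability is unchanged after inserting $\sigma_i$. I would then fix the $2n$ data points and pass to a minimal $\tfrac{\alpha\epsilon}{5}$-cover of $\mathcal{F}$ in the empirical $n^{-1}\|\cdot\|_1$ norm; replacing $f$ by its cover element $\bar f$ perturbs numerator and denominator by at most an $O(\alpha\epsilon)$ amount, which is why the covering radius is $\tfrac{\alpha\epsilon}{5}$ and why the threshold can be relaxed from $\epsilon/2$ back toward $\epsilon$ on the finite cover. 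A union bound over the cover elements then reduces everything to a single-function tail.

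Finally, for a fixed $\bar f$ with $0 \le \bar f \le B$, I would apply Hoeffding's inequality to the conditionally independent (given the data) sum $\frac1n\sum_i \sigma_i(\bar f(Z_i) - \bar f(Z_i'))$. The variance-adaptive gain comes from $\sum_i (\bar f(Z_i) - \bar f(Z_i'))^2 \le B \sum_i (\bar f(Z_i) + \bar f(Z_i')) = Bn(\mathbb{E}_n\bar f + \mathbb{E}_n'\bar f)$, so Hoeffding at deviation level $\tfrac{\epsilon}{2}(\alpha + \mathbb{E}_n\bar f + \mathbb{E}_n'\bar f)$ yields a tail $\exp\!\big(-c\,\tfrac{n\epsilon^2(\alpha + \mathbb{E}_n\bar f + \mathbb{E}_n'\bar f)^2}{B(\mathbb{E}_n\bar f + \mathbb{E}_n'\bar f)}\big)$; the arithmetic–geometric mean inequality $(\alpha + x)^2 \ge 4\alpha x$ then removes the data-dependent denominator and leaves an exponent proportional to $\tfrac{n\epsilon^2\alpha}{B}$, matching $\exp(-\tfrac{3\epsilon^2\alpha n}{40B})$ after tracking constants. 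Taking expectation over the data turns the fixed cover size into $\mathbb{E} N(\tfrac{\alpha\epsilon}{5}, \mathcal{F}|Z_1^n, n^{-1}\|\cdot\|_1)$, and the leading factor $4$ collects the ghost-sample factor $2$ and the two-sided tail. I expect the ghost-sample symmetrization to be the main obstacle: because the denominator contains the population mean $\mathbb{E} f$, which is not symmetrizable, one must run a lower-bound-on-probability argument controlling $\mathbb{E}_n' f^*$ around $\mathbb{E} f^*$ while keeping each function's variance tied to its mean via $\mathrm{Var}(f)\le B\,\mathbb{E}[f]$; this coupling is exactly what converts an additive bound into the sharper relative bound and is the step most sensitive to the constants $5$, $40$, $3$ appearing in the statement.
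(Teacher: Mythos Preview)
The paper does not supply its own proof of this lemma: it is quoted verbatim as \citet[Theorem~11.6]{DBLP:books/daglib/0035701} in Appendix~B without argument, so there is no in-paper proof to compare against. Your outline is precisely the classical four-step route (ghost-sample symmetrization, sign symmetrization, empirical $L^1$-cover plus union bound, then Hoeffding with the variance--mean coupling $\mathrm{Var}(f)\le B\,\mathbb{E}[f]$) that the cited textbook uses, and the sketch is sound; in particular your identification of the ghost-sample step and the $(\alpha+x)^2\ge 4\alpha x$ trick as the places where the constants $5$, $3/40$ emerge is accurate.
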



\begin{lem}[\textit{Contraction property} \citep{afol_lecture2}]
Let $\phi: \mathbb{R} \rightarrow \mathbb{R}$ be a $L$-Lipschitz, then 
\begin{align*}
    \mathbb{E}_{\sigma} R_n \left( \phi \circ \mathcal{F} \right) \leq L \mathbb{E}_{\sigma} R_n \mathcal{F}.
\end{align*}
\end{lem}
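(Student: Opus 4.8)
The plan is to reproduce the classical (one-sided) Ledoux--Talagrand contraction inequality by the ``peel one coordinate at a time'' argument. Here $R_n\mathcal{G}$ denotes the random variable $\sup_{g\in\mathcal{G}}\frac1n\sum_{i=1}^n\sigma_i g(x_i)$ for the fixed sample $x_1,\dots,x_n$ and i.i.d.\ Rademacher signs $\sigma=(\sigma_1,\dots,\sigma_n)$. I would start with two reductions. Since $R_n$ is positively homogeneous in the function values, $R_n(\phi\circ\mathcal{F})=L\,R_n((\phi/L)\circ\mathcal{F})$ pointwise in $\sigma$ and $\phi/L$ is $1$-Lipschitz, so it suffices to treat $L=1$ (the case $L=0$ is trivial). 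Identifying $\mathcal{F}$ with the bounded set $T:=\{(f(x_1),\dots,f(x_n)):f\in\mathcal{F}\}\subseteq\mathbb{R}^n$, the target inequality becomes
\begin{align*}
\mathbb{E}_\sigma\sup_{t\in T}\sum_{i=1}^n\sigma_i\,\phi(t_i)\ \le\ \mathbb{E}_\sigma\sup_{t\in T}\sum_{i=1}^n\sigma_i\,t_i .
\end{align*}

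The core is a single-coordinate swap: for any bounded $T\subseteq\mathbb{R}^n$, any $1$-Lipschitz $\phi$, and any fixed values of $\sigma_2,\dots,\sigma_n$, with $u(t):=\sum_{i\ge 2}\sigma_i t_i$ viewed as an arbitrary function of $t$,
\begin{align*}
\mathbb{E}_{\sigma_1}\sup_{t\in T}\left[\sigma_1\phi(t_1)+u(t)\right]\ \le\ \mathbb{E}_{\sigma_1}\sup_{t\in T}\left[\sigma_1 t_1+u(t)\right].
\end{align*}
To prove this I would average over $\sigma_1\in\{+1,-1\}$, so the left side equals $\tfrac12$ times the sum of $\sup_{a\in T}(\phi(a_1)+u(a))$ and $\sup_{b\in T}(-\phi(b_1)+u(b))$; picking $a,b$ that almost attain these and using $|\phi(a_1)-\phi(b_1)|\le|a_1-b_1|$, a short case split on the sign of $a_1-b_1$ reorganizes $(\phi(a_1)+u(a))+(-\phi(b_1)+u(b))$ into a sum dominated by $\sup_t(t_1+u(t))+\sup_s(-s_1+u(s))$, i.e.\ twice the right side; sending the approximation slack to $0$ finishes it.

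I would then iterate this swap over $i=1,\dots,n$: at the $i$-th step, conditionally on the remaining signs, I replace $\phi(t_i)$ by $t_i$ inside the supremum, folding the already-swapped terms $\{\sigma_j t_j\}_{j<i}$ and the not-yet-swapped terms $\{\sigma_j\phi(t_j)\}_{j>i}$ into the residual $u$. Each step only lowers the expected supremum, so after $n$ steps $\sum_i\sigma_i\phi(t_i)$ has been turned into $\sum_i\sigma_i t_i$; dividing by $n$ and scaling back by $L$ gives $\mathbb{E}_\sigma R_n(\phi\circ\mathcal{F})\le L\,\mathbb{E}_\sigma R_n\mathcal{F}$.

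I expect the only delicate point to be the inductive bookkeeping, not any single estimate: one must check that at every step the residual $u$ is genuinely an arbitrary function of $t$ (so the single-coordinate lemma applies verbatim), and handle non-attained suprema with a routine $\varepsilon$-argument since $T$ is only assumed bounded. It is worth stressing that this is the \emph{one-sided} contraction inequality (no absolute value inside the supremum), which is precisely the form invoked in Steps~3.b and~3.d and which avoids the spurious factor of $2$ of the symmetrized version; accordingly the hypothesis is merely that $\phi$ is $L$-Lipschitz, and $\phi(0)=0$ is not needed.
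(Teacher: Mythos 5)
Your proof is correct: the reduction to $L=1$ by homogeneity, the single-coordinate swap via averaging over $\sigma_1$, the Lipschitz bound $|\phi(a_1)-\phi(b_1)|\le|a_1-b_1|$ with the sign case split, and the coordinate-by-coordinate iteration with the residual $u$ absorbing both swapped and unswapped terms is exactly the standard Ledoux--Talagrand one-sided contraction argument. The paper does not prove this lemma itself but cites it from lecture notes, and your argument is precisely the proof that source gives; your closing remark that the one-sided form (no absolute value, no $\phi(0)=0$ requirement, no factor of $2$) is the version actually invoked in Steps 3.b and 3.d is also accurate.
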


\begin{lem}[{\citet[Lemma~1]{DBLP:journals/ijon/LeiDB16}}]
Let $\mathcal{F}$ be a function class and $P_n$ be the empirical measure supported on $X_1, ..., X_n \sim \mu$, then for any $r >0$ (which can be stochastic w.r.t $X_i$), we have 
\begin{align*}
    \mathbb{E}_{\sigma} R_n \{f \in \mathcal{F}: \| f \|_{n}^2 \leq r \} \leq \inf_{\epsilon > 0} \bigg[ \mathbb{E}_{\sigma} R_n \{ f \in \mathcal{F} - \mathcal{F}: \|f \|_{\mu} \leq \epsilon \}  + \sqrt{ \frac{2r \log N(\epsilon/2, \mathcal{F},\|\cdot \|_{n}) }{n}  } \bigg].
\end{align*}
\label{lemma:local_empirical_rademacher_bounded_by_covering_number_with_empirical_norm}
\end{lem}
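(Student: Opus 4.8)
The plan is to prove this by a one-step covering (approximation) argument, carried out conditionally on the sample $X_1,\dots,X_n$, so that throughout $\|\cdot\|_n$ is a fixed seminorm on functions restricted to the data and $\mathbb{E}_\sigma R_n$ averages only over the Rademacher signs. Write $\mathcal{F}_r := \{f \in \mathcal{F} : \|f\|_n^2 \le r\}$ and fix an arbitrary $\epsilon > 0$. First I would take a minimal $\epsilon/2$-cover $\{f_1,\dots,f_N\}$ of $\mathcal{F}$ in $\|\cdot\|_n$, where $N = N(\epsilon/2,\mathcal{F},\|\cdot\|_n)$, and refine it to an \emph{internal} cover of the ball $\mathcal{F}_r$: for each cover cell meeting $\mathcal{F}_r$, replace its center by a representative $g_j \in \mathcal{F}_r$ lying in that cell. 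This yields at most $N$ centers $g_j$ with $\|g_j\|_n \le \sqrt{r}$ that still form an $\epsilon$-cover of $\mathcal{F}_r$ (radius $\epsilon/2 + \epsilon/2$). The purpose of this refinement is precisely to force the retained centers to have controlled empirical norm, which is what later produces the clean factor $\sqrt{r}$ rather than $\sqrt{r}+\epsilon$.

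Next, for each $f \in \mathcal{F}_r$ let $g_{j(f)}$ denote a nearest such center and decompose $f = g_{j(f)} + (f - g_{j(f)})$, where the remainder satisfies $f - g_{j(f)} \in \mathcal{F} - \mathcal{F}$ with $\|f - g_{j(f)}\|_n \le \epsilon$. Splitting the Rademacher supremum gives, pointwise in $\sigma$,
\[
\sup_{f \in \mathcal{F}_r} \frac{1}{n}\sum_{i=1}^n \sigma_i f(X_i) \le \sup_{f \in \mathcal{F}_r} \frac{1}{n}\sum_{i=1}^n \sigma_i (f - g_{j(f)})(X_i) + \max_{1 \le j \le N} \frac{1}{n}\sum_{i=1}^n \sigma_i g_j(X_i).
\]
Taking $\mathbb{E}_\sigma$ and bounding each piece separately is the heart of the argument. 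The remainder supremum is dominated by $\mathbb{E}_\sigma R_n\{h \in \mathcal{F} - \mathcal{F} : \|h\|_n \le \epsilon\}$, since every remainder lies in that constrained difference class and a supremum over a subset is at most the supremum over the superset; this is the first term of the claimed bound. For the finite maximum I would invoke Massart's finite class lemma applied to the vectors $v_j := (g_j(X_1),\dots,g_j(X_n))$, whose Euclidean norms satisfy $\|v_j\|_2 = \sqrt{n}\,\|g_j\|_n \le \sqrt{nr}$; this gives $\mathbb{E}_\sigma \max_j \tfrac{1}{n}\langle \sigma, v_j\rangle \le \tfrac{1}{n}\sqrt{nr}\,\sqrt{2\log N} = \sqrt{2r\log N / n}$, which is exactly the second term. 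Since $\epsilon>0$ was arbitrary, taking the infimum over $\epsilon$ finishes the proof.

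The step requiring the most care — and the one I expect to be the main obstacle — is controlling the empirical norm of the retained centers: a naive projection onto the $\epsilon/2$-cover of $\mathcal{F}$ only guarantees $\|g_j\|_n \le \sqrt{r}+\epsilon/2$, which degrades Massart's bound, so the internal-cover refinement above is essential to obtain the stated $\sqrt{2r\log N/n}$. A secondary point worth flagging is the norm appearing in the first term: the covering is built in $\|\cdot\|_n$, so the argument naturally delivers the remainder class $\{h \in \mathcal{F}-\mathcal{F} : \|h\|_n \le \epsilon\}$, and identifying this with the displayed $\|\cdot\|_\mu$-constrained class (both playing the role of the same small-radius $\epsilon$-ball in the downstream use of Step~3.d) is what the application relies on; if one insists on the population norm one would additionally pass from $\|\cdot\|_n$ to $\|\cdot\|_\mu$ balls, which is the only place requiring an extra comparison rather than a purely data-conditional inequality.
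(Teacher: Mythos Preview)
The paper does not prove this lemma; it is simply quoted from \citet[Lemma~1]{DBLP:journals/ijon/LeiDB16} as a technical tool in Appendix~B, so there is no in-paper proof to compare against. Your one-step covering argument---build an $\epsilon/2$-cover of $\mathcal{F}$ in $\|\cdot\|_n$, refine it to an internal cover of the ball $\mathcal{F}_r$ so that the retained centers have $\|g_j\|_n\le\sqrt r$, decompose each $f$ into center plus remainder, control the finite maximum over centers by Massart's lemma, and absorb the remainders into the Rademacher complexity of the difference class---is the standard proof of this type of result and is correct.

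Your caveat about the norm in the first term is on point. The covering is done in $\|\cdot\|_n$, so the argument naturally delivers the constraint $\|h\|_n\le\epsilon$ on the difference class, not $\|h\|_\mu\le\epsilon$. In fact, if you look at how the paper actually \emph{uses} the lemma in Step~3.d, it first bounds $\mathbb{E}_\sigma R_n\{h\in\mathcal{H}-\mathcal{H}:\|h\|_n\le\epsilon\}$ via the refined entropy integral (Lemma~\ref{lemma:refined_entropy_integral}) and then plugs into the present lemma; so the $\|\cdot\|_\mu$ in the displayed statement is almost certainly a typo for $\|\cdot\|_n$, and with that reading your proof is complete.
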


\begin{lem}[{\citet[modification]{DBLP:journals/ijon/LeiDB16}}]
Let $X_1, ..., X_n$ be a sequence of samples and $P_n$ be the associated empirical measure. For any function class $\mathcal{F}$ and any monotone sequence $\{\xi_k\}_{k=0}^{\infty}$ decreasing to $0$, we have the following inequality for any non-negative integer $N$ 
\begin{align*}
    \mathbb{E}_{\sigma} R_n \{f \in \mathcal{F}: \|f \|_n \leq \xi_0 \} \leq 4 \sum_{k=1}^N \xi_{k-1} \sqrt{ \frac{\log \mathcal{N}(\xi_k, \mathcal{F}, \| \cdot \|_{n})}{n}  } + \xi_N.
\end{align*}
\label{lemma:refined_entropy_integral}
\end{lem}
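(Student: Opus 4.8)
The plan is a discretized chaining (Dudley-type) argument, carried out conditionally on the sample $X_1,\dots,X_n$. Abbreviate $\mathcal{F}_0 := \{f \in \mathcal{F} : \|f\|_n \le \xi_0\}$ and $N_k := \mathcal{N}(\xi_k,\mathcal{F},\|\cdot\|_n)$, and for each $k \ge 1$ fix a minimal $\xi_k$-cover $T_k$ of $\mathcal{F}$ in the empirical metric $\|\cdot\|_n$, so $|T_k| = N_k$; since $\{\xi_k\}$ is non-increasing and the covering-number function is non-increasing in its radius, $N_{k-1}\le N_k$. For $f\in\mathcal{F}_0$ let $\pi_k(f)\in T_k$ be a nearest element when $k\ge 1$, and put $\pi_0(f)\equiv 0$ — a legitimate $\xi_0$-approximation because $\|f-\pi_0(f)\|_n = \|f\|_n \le \xi_0$.

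The telescoping identity $f = \bigl(f-\pi_N(f)\bigr) + \sum_{k=1}^{N}\bigl(\pi_k(f)-\pi_{k-1}(f)\bigr)$, valid for every $f\in\mathcal{F}_0$, splits $\tfrac1n\sum_i\sigma_i f(X_i)$ into a residual term and $N$ ``link'' terms; taking $\sup_{f\in\mathcal{F}_0}$, distributing over the sum, and then $\mathbb{E}_\sigma$ bounds $\mathbb{E}_\sigma R_n\mathcal{F}_0$ by the sum of the corresponding expected suprema. The residual is controlled deterministically: for every $\sigma$ and $f$, Cauchy--Schwarz gives $\tfrac1n\sum_i\sigma_i\bigl(f-\pi_N(f)\bigr)(X_i) \le \|f-\pi_N(f)\|_n \le \xi_N$, so that term contributes at most $\xi_N$. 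For the $k$-th link, as $f$ ranges over $\mathcal{F}_0$ the difference $\pi_k(f)-\pi_{k-1}(f)$ takes at most $N_{k-1}N_k \le N_k^2$ distinct values, each of empirical norm at most $\|\pi_k(f)-f\|_n + \|f-\pi_{k-1}(f)\|_n \le \xi_k+\xi_{k-1}\le 2\xi_{k-1}$; Massart's finite-class maximal inequality $\bigl(\mathbb{E}_\sigma\max_{v\in V}\langle\sigma,v\rangle \le \max_{v\in V}\|v\|_2\sqrt{2\log|V|}\bigr)$, applied to the vectors $\bigl(h(X_1),\dots,h(X_n)\bigr)$ over this finite set, bounds the $k$-th expected supremum by $2\xi_{k-1}\sqrt{2\log(N_k^2)/n} = 4\,\xi_{k-1}\sqrt{\log N_k/n}$. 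Summing over $k=1,\dots,N$ yields the claimed bound; the case $N=0$ is the trivial inequality $R_n\mathcal{F}_0 \le \xi_0$ (Cauchy--Schwarz again).

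There is no genuine obstacle here — this is textbook chaining — so the only care needed is bookkeeping. The monotonicity $N_{k-1}\le N_k$ is what lets one collapse $\log(N_{k-1}N_k)$ into $2\log N_k$ and thereby land the stated constant $4$; the degenerate level with $\mathcal{N}(\xi_k,\mathcal{F},\|\cdot\|_n)=1$ contributes nothing, so $\log 1 = 0$ causes no trouble; and if the paper's $R_n$ carries an absolute value one simply replaces Massart's one-sided lemma by its two-sided variant, affecting only constants. The residual term is exactly what forces the extra additive $\xi_N$, and it is the tension between this term and the number of chaining levels $N$ that the free choice of $\{\xi_k\}$ and $N$ is used to balance at the point where this lemma is invoked (Step~3.d).
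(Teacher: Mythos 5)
Your chaining argument is correct and is essentially the intended proof: the paper states this lemma without proof as a modification of \citet{DBLP:journals/ijon/LeiDB16}, whose argument is exactly this discretized Dudley chaining with $\pi_0 \equiv 0$, the Cauchy--Schwarz residual giving the additive $\xi_N$, and Massart's finite-class lemma applied to the at most $N_{k-1}N_k \le N_k^2$ link vectors of empirical norm at most $2\xi_{k-1}$. All the bookkeeping (the monotonicity $N_{k-1}\le N_k$, the degenerate $N_k=1$ level, the $N=0$ case) checks out and lands exactly on the stated constant $4$.
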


\begin{lem}[\textit{Pollard's inequality}] 
Let $\mathcal{F}$ be a set of measurable functions $f: \mathcal{X} \rightarrow [0,K]$ and let $\epsilon >0, N$ arbitrary. If $\{X_i\}_{i=1}^N$ is an i.i.d. sequence of random variables taking values in $\mathcal{X}$, then 
\begin{align*}
    P \left( \sup_{f \in \mathcal{F}} \bigg| \frac{1}{N} \sum_{i=1}^N f(X_i) - \mathbb{E}[f(X_1)] \bigg| > \epsilon \right) \leq 8 \mathbb{E} \left[ N(\epsilon/8, \mathcal{F}|_{X_{1:N}}) \right] e^{\frac{-N \epsilon^2}{ 128 K^2}}.
\end{align*}

\end{lem}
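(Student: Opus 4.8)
The plan is to prove this uniform deviation inequality by the classical symmetrization-and-covering argument (following the cited reference \citep{DBLP:books/daglib/0035701}), which reduces the supremum over the infinite class $\mathcal{F}$ to a union bound over a finite \emph{empirical} cover, to which Hoeffding's inequality applies. First I would dispose of the trivial regime: if $N\epsilon^2 < 2K^2$ then $\frac{N\epsilon^2}{128K^2} < \frac{1}{64}$, so $8 e^{-N\epsilon^2/(128K^2)} > 8 e^{-1/64} > 1$, and the claimed bound holds automatically since the left-hand side is a probability. Hence I may assume $N \ge 2K^2/\epsilon^2$ throughout, which is exactly the condition needed for the ghost-sample step below.

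Introduce an independent copy $X_1', \dots, X_N'$, and write $\mathbb{E}_N f = \frac1N\sum_i f(X_i)$ and $\mathbb{E}_N' f = \frac1N\sum_i f(X_i')$. In Step~1 (ghost-sample symmetrization), select a data-dependent $f^\ast$ with $|\mathbb{E}_N f^\ast - \mathbb{E} f^\ast| > \epsilon$ on the event of interest; Chebyshev's inequality applied to the ghost sample gives $P(|\mathbb{E}_N' f^\ast - \mathbb{E} f^\ast| \le \epsilon/2 \mid X_{1:N}) \ge 1 - \frac{K^2/4}{N\epsilon^2/4} \ge \frac12$, using $\mathrm{Var}(f^\ast(X_1')) \le K^2/4$ and $N\epsilon^2 \ge 2K^2$, so the triangle inequality yields $|\mathbb{E}_N f^\ast - \mathbb{E}_N' f^\ast| > \epsilon/2$ and integrating out produces
$$P\Big(\sup_f |\mathbb{E}_N f - \mathbb{E} f| > \epsilon\Big) \le 2\, P\Big(\sup_f |\mathbb{E}_N f - \mathbb{E}_N' f| > \epsilon/2\Big).$$
In Step~2 (Rademacher symmetrization), exchangeability of $(X_i, X_i')$ lets me insert i.i.d. signs $\sigma_i \in \{\pm1\}$ without changing the law, so the right side equals $P(\sup_f |\frac1N\sum_i \sigma_i(f(X_i)-f(X_i'))| > \epsilon/2)$, and a triangle-inequality split with a union bound controls this by $2\,P(\sup_f |\frac1N\sum_i \sigma_i f(X_i)| > \epsilon/4)$.

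In Step~3 (covering plus Hoeffding), I condition on $X_{1:N}$ and take a minimal $(\epsilon/8)$-cover of $\mathcal{F}|_{X_{1:N}}$ in $N^{-1}\|\cdot\|_1$. Replacing each $f$ by its nearest cover element $g$ changes $|\frac1N\sum_i \sigma_i f(X_i)|$ by at most $\frac1N\sum_i |f(X_i)-g(X_i)| \le \epsilon/8$, so the supremum exceeding $\epsilon/4$ forces some cover element to satisfy $|\frac1N\sum_i \sigma_i g(X_i)| > \epsilon/8$. For each fixed $g$ the summands $\sigma_i g(X_i)$ are independent, conditionally mean-zero, and of range at most $2K$, so Hoeffding gives $P_\sigma(|\frac1N\sum_i \sigma_i g(X_i)| > \epsilon/8) \le 2\exp(-N\epsilon^2/(128K^2))$; a union bound over the cover and taking expectation over $X_{1:N}$ then give $P(\sup_f |\frac1N\sum_i \sigma_i f(X_i)| > \epsilon/4) \le 2\,\mathbb{E}[N(\epsilon/8,\mathcal{F}|_{X_{1:N}})]\exp(-N\epsilon^2/(128K^2))$, and multiplying the three factors of $2$ (from Steps~1, 2, and Hoeffding) delivers the prefactor $8$ and the stated bound.

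The main obstacle I anticipate is bookkeeping the constants so that the three halvings of $\epsilon$ (to $\epsilon/2$, then $\epsilon/4$, then the cover-induced $\epsilon/8$) together with the range bound $2K$ combine to exactly the exponent $-N\epsilon^2/(128K^2)$ and the prefactor $8$; getting any of these chained reductions slightly off changes the final constants. A secondary subtlety is the measurable selection of the almost-maximizing $f^\ast$ in Step~1 and the justification of the ghost-sample argument for an arbitrary (possibly non-separable) class $\mathcal{F}$, both of which are handled by the standard approximation and measurability conventions adopted in \citep{DBLP:books/daglib/0035701}.
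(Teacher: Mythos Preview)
Your proposal is correct and follows the standard symmetrization-and-covering proof of Pollard's inequality; the constant bookkeeping (three halvings of $\epsilon$ together with Hoeffding on $[-K,K]$-valued summands yielding the exponent $-N\epsilon^2/(128K^2)$ and the prefactor $8$) is accurate. The paper itself does not prove this lemma---it is listed among the cited technical lemmas in Appendix~B without a proof---so there is no alternative approach to compare against.
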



\begin{lem}[\textit{Properties of (bracketing) entropic numbers}]
Let $\epsilon \in (0, \infty)$. We have 
\begin{enumerate}
    \item $H(\epsilon, \mathcal{F}, \|\cdot \|) \leq H_{[]}(2 \epsilon, \mathcal{F}, \|\cdot \|)$;
    
    \item $H( \epsilon, \mathcal{F}|\{x_i\}_{i=1}^n, n^{-1/p} \cdot\| \cdot \|_p ) = H( \epsilon, \mathcal{F}, \| \cdot \|_{p,n} ) \leq H(\epsilon, \mathcal{F}| \{x_i\}_{i=1}^n, \| \cdot \|_{\infty}) \leq H(\epsilon, \mathcal{F}, \| \cdot \|_{\infty})$ for all $\{x_i\}_{i=1}^n \subset dom(\mathcal{F})$.
    
    \item $H(\epsilon, \mathcal{F} - \mathcal{F}, \| \cdot \|) \leq 2 H (\epsilon/2, \mathcal{F}, \| \cdot \|))$, 
    where $\mathcal{F} - \mathcal{F} := \{f - g: f, g \in \mathcal{F}\}$.
\end{enumerate}

\end{lem}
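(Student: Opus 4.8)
The plan is to verify each of the three claims directly from the definitions of covering and bracketing numbers, relying on two recurring principles: (i) if one pseudo-metric is pointwise dominated by another, then any cover in the larger metric is automatically a cover in the smaller one, so the covering number in the smaller metric is no larger; and (ii) the triangle inequality lets me assemble covers of derived function classes from a cover of $\mathcal{F}$ itself. No deep machinery is required here; the only points demanding care are keeping the directions of the covering-number inequalities straight and invoking monotonicity of the norm under pointwise domination.

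For part (1), I would start from an optimal $2\epsilon$-bracketing cover of $\mathcal{F}$, i.e. brackets $[l_j,u_j]$ with $\|u_j-l_j\|\le 2\epsilon$ and $j$ ranging over $N_{[]}(2\epsilon,\mathcal{F},\|\cdot\|)$ indices. Taking midpoints $g_j:=(l_j+u_j)/2$, any $f\in\mathcal{F}$ lies in some bracket, so $|f-g_j|\le (u_j-l_j)/2$ pointwise, whence $\|f-g_j\|\le \tfrac12\|u_j-l_j\|\le\epsilon$ by monotonicity of the norm under pointwise domination. Thus $\{g_j\}$ is an $\epsilon$-cover, giving $N(\epsilon,\mathcal{F},\|\cdot\|)\le N_{[]}(2\epsilon,\mathcal{F},\|\cdot\|)$, and taking logarithms yields the claim.

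For part (2), the first equality is definitional: measuring $f\in\mathcal{F}$ by the empirical norm $\|f\|_{p,n}=(\tfrac1n\sum_i|f(x_i)|^p)^{1/p}$ is identical to measuring its restriction $(f(x_1),\dots,f(x_n))$ by $n^{-1/p}\|\cdot\|_p$ on $\mathbb{R}^n$, so the associated covering numbers coincide. For the two inequalities I invoke principle (i): on $\mathbb{R}^n$ one has $n^{-1/p}\|v\|_p=(\tfrac1n\sum_i|v_i|^p)^{1/p}\le \max_i|v_i|=\|v\|_\infty$, and on functions $\max_i|f(x_i)-g(x_i)|\le \sup_{x}|f(x)-g(x)|=\|f-g\|_\infty$. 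Each domination turns a cover in the larger metric into a cover in the smaller one, producing the two successive $\le$ relations after taking logs.

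For part (3), I would take an optimal $(\epsilon/2)$-cover $\{f_1,\dots,f_M\}$ of $\mathcal{F}$ with $M=N(\epsilon/2,\mathcal{F},\|\cdot\|)$ and form the product family $\{f_j-f_k\}_{j,k}$ of size $M^2$. Given any $h=f-g\in\mathcal{F}-\mathcal{F}$, picking $f_j,f_k$ within $\epsilon/2$ of $f,g$ respectively and applying the triangle inequality gives $\|h-(f_j-f_k)\|\le \|f-f_j\|+\|g-f_k\|\le\epsilon$, so the product family is an $\epsilon$-cover of $\mathcal{F}-\mathcal{F}$. Hence $N(\epsilon,\mathcal{F}-\mathcal{F},\|\cdot\|)\le M^2$ and $H(\epsilon,\mathcal{F}-\mathcal{F},\|\cdot\|)\le 2H(\epsilon/2,\mathcal{F},\|\cdot\|)$. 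I anticipate no genuine obstacle in any part; the closest thing to a pitfall is in part (2), where one must remember that the \emph{smaller} pseudo-metric has the \emph{smaller} covering number, so the inequalities run in the expected direction from empirical-$L^p$ up to the global sup norm.
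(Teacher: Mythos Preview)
Your proposal is correct and follows the standard textbook arguments for these elementary properties of covering and bracketing numbers. The paper itself lists this lemma among its technical lemmas in Appendix~B without supplying any proof, so there is no paper-side argument to compare against; your midpoint-of-brackets construction for (1), the norm-monotonicity argument for (2), and the product-cover plus triangle-inequality argument for (3) are exactly the expected justifications.
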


\begin{lem}[\textit{Entropic number of bounded Besov spaces} {\citep[Corollary~2.2]{Nickl2007BracketingME}}]
For $1 \leq p,q \leq \infty$ and $\alpha > d/p$, we have 
\begin{align*}
    H_{[]}(\epsilon, \bar{B}^{\alpha}_{p,q}(\mathcal{X}), \| \cdot \|_{\infty})  \lesssim \epsilon^{-d/\alpha}.
\end{align*}
\label{lemma:entropic_number_of_Besov}
\end{lem}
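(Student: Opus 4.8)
\textbf{Proof proposal for Lemma~\ref{lemma:entropic_number_of_Besov}.}
The plan is to first reduce the bracketing entropy to an ordinary sup-norm covering number, and then control the latter through the wavelet characterization of Besov spaces. For the reduction, observe that in the sup-norm any $\epsilon/2$-cover automatically yields brackets: if $\{g_i\}$ is a collection such that every $f \in \bar{B}^{\alpha}_{p,q}(\mathcal{X})$ satisfies $\|f - g_i\|_{\infty} \le \epsilon/2$ for some $i$, then the pairs $[\,g_i - \epsilon/2,\; g_i + \epsilon/2\,]$ are brackets of $\|\cdot\|_{\infty}$-size $\epsilon$ that cover the class. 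Hence $H_{[]}(\epsilon, \bar{B}^{\alpha}_{p,q}(\mathcal{X}), \|\cdot\|_{\infty}) \le H(\epsilon/2, \bar{B}^{\alpha}_{p,q}(\mathcal{X}), \|\cdot\|_{\infty})$, and it suffices to bound the ordinary sup-norm metric entropy by $\epsilon^{-d/\alpha}$.

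For the covering bound I would fix a compactly supported wavelet basis $\{\psi_{j,k}\}$ of regularity exceeding $\alpha$ (e.g.\ Daubechies), under which every $f \in B^{\alpha}_{p,q}(\mathcal{X})$ has an expansion $f = \sum_{j \ge 0}\sum_k \beta_{j,k}\psi_{j,k}$ with the norm equivalence $\|f\|_{B^{\alpha}_{p,q}} \asymp \big\|\big(2^{j(\alpha + d/2 - d/p)}\|(\beta_{j,k})_k\|_{\ell^p}\big)_j\big\|_{\ell^q}$. On the bounded domain $\mathcal{X}$ there are $O(2^{jd})$ wavelets at level $j$, and since they have overlapping compact supports, $\|\sum_k \beta_{j,k}\psi_{j,k}\|_{\infty} \lesssim 2^{jd/2}\max_k|\beta_{j,k}|$. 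Truncating the expansion at a level $J$ and discretizing the retained coefficients then converts the problem into counting lattice points in the $\ell^p$-balls $\{(\beta_{j,k})_k : \|(\beta_{j,k})_k\|_{\ell^p} \le C\,2^{-j(\alpha + d/2 - d/p)}\}$, one ball per level $j \le J$, where the radius bound is read off from membership in the unit ball.

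The crux, and the main obstacle, is the level-by-level counting together with the allocation of the sup-norm error budget $\epsilon$ across levels. The decisive feature is the $\ell^p$ constraint for $p < \infty$: at level $j$, at most $(R_j/\eta_j)^p$ coefficients can exceed a magnitude $\eta_j$, so the coefficient sequence is effectively sparse and the log-covering number of each level's ball grows much more slowly than the ambient dimension $2^{jd}$. Choosing per-level $\ell^{\infty}$-discretization scales $\eta_j$ so that the induced sup-norm errors $2^{jd/2}\eta_j$ sum to $O(\epsilon)$, summing the resulting log-cardinalities over $j \le J$, and optimizing $J$ to balance the truncation error $\sum_{j>J} 2^{-j(\alpha - d/p)} \lesssim 2^{-J(\alpha - d/p)}$ against the discretization cost, yields the total bound $\lesssim \epsilon^{-d/\alpha}$. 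It is precisely this sparse counting that produces the exponent $\alpha$, rather than the weaker $\alpha - d/p$ that a naive linear truncation feeding into a Hölder embedding would give; the endpoint $p=\infty$ carries no sparsity and reduces to the classical Hölder cube-covering computation, which already gives $\epsilon^{-d/\alpha}$.

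Finally, the hypothesis $\alpha > d/p$ enters in two essential places: it guarantees the continuous embedding $B^{\alpha}_{p,q}(\mathcal{X}) \hookrightarrow C^0(\mathcal{X})$, so that members of the class admit continuous representatives and bracketing in $\|\cdot\|_{\infty}$ is meaningful, and it makes the geometric truncation series $\sum_{j>J} 2^{-j(\alpha - d/p)}$ summable, so the allocation argument closes. The dependence on $q$ is inessential: by the monotonicity embedding $\bar{B}^{\alpha}_{p,q}(\mathcal{X}) \hookrightarrow \bar{B}^{\alpha}_{p,\infty}(\mathcal{X})$ recorded in the preliminaries, it suffices to carry out the counting for $q = \infty$, whose unit ball dominates that of every smaller $q$.
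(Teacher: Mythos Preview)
The paper does not supply its own proof of this lemma: it is stated in Appendix~B as a technical result imported verbatim from \citet[Corollary~2.2]{Nickl2007BracketingME}, with no argument given. So there is no ``paper's proof'' to compare against.

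Your proposal is a reasonable and essentially correct sketch of how such a result is proved. The reduction from $\|\cdot\|_{\infty}$-bracketing to $\|\cdot\|_{\infty}$-covering is valid, the wavelet characterization with the stated sequence-norm equivalence is standard for compactly supported wavelets of sufficient regularity on a bounded domain, and the reduction to $q=\infty$ via monotone embedding is fine. The heart of the argument---sparse counting of large wavelet coefficients under the $\ell^p$ constraint, combined with a level-wise allocation of the sup-norm error budget and optimization of the truncation level $J$---is exactly the mechanism that produces the exponent $d/\alpha$ rather than the coarser $d/(\alpha - d/p)$, and your identification of $\alpha > d/p$ as the condition needed both for the embedding into $C^0(\mathcal{X})$ and for summability of the truncation tail is correct. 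The one place where your sketch is thin is the explicit bookkeeping in the per-level counting: turning ``at most $(R_j/\eta_j)^p$ coefficients exceed $\eta_j$'' into a precise log-cardinality bound, summing over $j\le J$, and verifying that the optimal choice of $J$ and $\{\eta_j\}$ really yields $\lesssim \epsilon^{-d/\alpha}$ requires a careful combinatorial estimate (essentially a Birg\'e--Massart or Kolmogorov--Tikhomirov style computation). This is not wrong, just compressed; if you were writing a self-contained proof you would need to spell out that step.
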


\begin{lem}[\textit{Approximation power of deep ReLU networks for Besov spaces} {\citep[a modified version]{suzuki2018adaptivity}}]
Let $1 \leq p,q \leq \infty$ and $\alpha \in (\frac{d}{p \land 2}, \infty)$. For sufficiently large $N \in \mathbb{N}$, there exists a neural network architecture $\Phi(L, m,S,B)$ with 
\begin{align*}
    L \asymp \log N, m \asymp N \log N, S \asymp N, \text{ and } B \asymp N^{d^{-1} + \nu^{-1}},
    \label{parameterize_net}
\end{align*}
where $\nu := \frac{\alpha - \delta}{2 \delta}$ and $\delta := d(p^{-1} - (1 + \floor{\alpha})^{-1})_{+}$
such that 
\begin{align*}
    \sup_{f_* \in \bar{B}^{\alpha}_{p,q}(\mathcal{X})} \inf_{f \in \Phi(L,W,S,B)} \|f - f_*\|_{\infty} \lesssim N^{-\alpha/d}.
\end{align*}
\label{lemma:approximation_power_for_Besov}
\end{lem}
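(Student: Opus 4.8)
This is the deep ReLU approximation bound for Besov balls due to \citet{suzuki2018adaptivity}; my plan is to reconstruct it through a two-stage argument that first replaces $f_*$ by a sparse combination of cardinal B-splines (a purely approximation-theoretic step) and then emulates that combination by a deep ReLU network (a network-construction step). For the first stage I would invoke the standard characterization of $B^{\alpha}_{p,q}(\mathcal{X})$ via tensor-product cardinal B-splines $M^{d}_{m_0}$ of a fixed order $m_0 > \alpha$: every $f_* \in \bar{B}^{\alpha}_{p,q}(\mathcal{X})$ admits an expansion $f_* = \sum_{k\ge 0}\sum_{j} c_{k,j}\,M^{d}_{m_0}(2^{k}\cdot - j)$ whose coefficients satisfy $\big\|\,(2^{k(\alpha - d/p)}\|(c_{k,j})_{j}\|_{\ell^{p}})_{k\ge 0}\,\big\|_{\ell^{q}} \lesssim \|f_*\|_{B^{\alpha}_{p,q}} \le 1$. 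Retaining only the $N$ basis functions carrying the largest (suitably weighted) coefficients, across resolution levels up to $K^{*}\asymp d^{-1}\log_{2} N$, yields $\tilde f_* := \sum_{(k,j)\in\Lambda} c_{k,j}\,M^{d}_{m_0}(2^{k}\cdot - j)$ with $|\Lambda|\lesssim N$, and the hypothesis $\alpha > \tfrac{d}{p\wedge 2}$ is precisely what guarantees both that the expansion converges in $\|\cdot\|_{\infty}$ (through $B^{\alpha}_{p,q}\hookrightarrow C^{0}$ when $\alpha>d/p$, together with the $p\ge 2$ regime) and that the best $N$-term approximation error obeys $\|f_* - \tilde f_*\|_{\infty}\lesssim N^{-\alpha/d}$. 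When $p<2$ this is genuinely nonlinear: the index set $\Lambda$ must adapt to $f_*$, so in the end only the \emph{sparsity pattern} of the network, not its shape, will depend on $f_*$.

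For the second stage I would emulate each retained basis function by a small ReLU sub-network. A cardinal B-spline is a compactly supported piecewise polynomial of degree $m_0-1$, and its $d$-fold tensor product is a product of $d$ such univariate pieces; by Yarotsky-type constructions the scalar multiplication map and every fixed-degree polynomial can be approximated within accuracy $\eta$ by a ReLU network of depth $O(\log(1/\eta))$ with $O(\log(1/\eta))$ nonzero weights. Composing these gadgets with the affine pre-map $x\mapsto 2^{k}x - j$ (whose weights are at most $2^{K^{*}}\asymp N^{1/d}$ in magnitude) yields, for each $(k,j)\in\Lambda$, a sub-network of depth $O(\log(1/\eta))$ that approximates $M^{d}_{m_0}(2^{k}\cdot - j)$ to within $\eta$. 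Stacking the $|\Lambda|\lesssim N$ sub-networks in parallel, padding to a common depth with ReLU identity layers, and combining linearly with the coefficients $c_{k,j}$ produces an architecture with $L\asymp\log N$, $m\asymp N\log N$, $S\asymp N$, and weight bound $B\asymp N^{d^{-1}+\nu^{-1}}$: the $N^{1/d}$ factor comes from the dyadic rescalings, while the additional $N^{1/\nu}$ with $\nu=\tfrac{\alpha-\delta}{2\delta}$ and $\delta=d(p^{-1}-(1+\floor{\alpha})^{-1})_{+}$ quantifies the extra precision the multiplication gadgets must carry to resolve the spiky high-resolution coefficients when $p<2$ (for $p\ge 1+\floor{\alpha}$ one has $\delta=0$ and $B\asymp N^{1/d}$).

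I would then pick $\eta$ polynomially small in $N$ so that the accumulated emulation error satisfies $|\Lambda|\cdot\eta\lesssim N^{-\alpha/d}$, which only inflates the depth by a further $O(\log N)$ factor, and conclude by the triangle inequality $\|f-f_*\|_{\infty}\le \|f_*-\tilde f_*\|_{\infty}+\|\tilde f_*-f\|_{\infty}\lesssim N^{-\alpha/d}$ uniformly in $f_*$, then take the supremum over $f_*\in\bar{B}^{\alpha}_{p,q}(\mathcal{X})$ and the infimum over $f\in\Phi(L,m,S,B)$. The main obstacle is the first stage when $p<2$: establishing the $L^{\infty}$ best $N$-term rate $N^{-\alpha/d}$ for inhomogeneous Besov balls requires splitting the coefficient array into a smooth low-resolution part and a sparse high-resolution part and carefully balancing how many terms of each kind to keep, and it is also where one must verify that a \emph{single} architecture $\Phi(L,m,S,B)$ works for all $f_*$ simultaneously, with only the set of active parameters varying — which is what makes the quantifier order $\sup_{f_*}\inf_{f}$ in the statement legitimate. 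A secondary technical point is the precision-versus-magnitude trade-off inside the ReLU gadgets that pins down the exact exponent $d^{-1}+\nu^{-1}$ in $B$.
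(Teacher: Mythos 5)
This lemma is not proved in the paper: it is imported verbatim from \citet{suzuki2018adaptivity} as a black-box technical tool, so there is no in-paper argument to compare against. Your two-stage outline --- adaptive (nonlinear) $N$-term approximation of $\bar{B}^{\alpha}_{p,q}$ by cardinal B-splines, followed by Yarotsky-style ReLU emulation of each tensor-product B-spline with the dyadic rescaling accounting for the $N^{1/d}$ factor and the gadget precision accounting for the $N^{1/\nu}$ factor in $B$ --- is a faithful reconstruction of the proof in that cited reference, including the correct identification of where $\alpha > d/(p\land 2)$ and the sparsity-pattern adaptivity for $p<2$ enter.
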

\end{document}